\documentclass{article}

\usepackage{microtype}
\usepackage{graphicx}
\usepackage{subcaption}
\usepackage{booktabs}
\usepackage[pagebackref]{hyperref}

\usepackage{multicol}
\usepackage{multirow}

\renewcommand*{\backrefalt}[4]{%
   \ifcase #1 \or \footnotesize(Cited on p.~#2)\else \footnotesize(Cited on pp.~#2)\fi}

\usepackage[preprint]{icml2026} %

\usepackage{amsmath}
\usepackage{amssymb}
\usepackage{mathtools}
\usepackage{amsthm}
\usepackage[capitalize,noabbrev]{cleveref}

\crefformat{chapter}{\S#2#1#3}
\crefformat{section}{\S#2#1#3}
\crefformat{appendix}{\S#2#1#3}
\crefformat{subsection}{\S#2#1#3}
\crefformat{subsubsection}{\S#2#1#3}
\crefrangeformat{section}{\S\S#3#1#4 to~#5#2#6}
\crefmultiformat{section}{\S\S#2#1#3}{ and~#2#1#3}{, #2#1#3}{ and~#2#1#3}

\theoremstyle{plain}
\newtheorem{theorem}{Theorem}[section]
\newtheorem{proposition}[theorem]{Proposition}

\theoremstyle{definition}

\theoremstyle{remark}

\usepackage{pifont}
\newcommand{\cmark}{\ding{51}}%
\newcommand{\xmark}{\ding{55}}%

\usepackage{listings}

\definecolor{codegreen}{rgb}{0,0.6,0}
\definecolor{codegray}{rgb}{0.5,0.5,0.5}
\definecolor{codepurple}{rgb}{0.58,0,0.82}
\definecolor{backcolour}{rgb}{0.95,0.95,0.92}

\lstdefinestyle{mystyle}{
    backgroundcolor=\color{backcolour},   
    commentstyle=\color{codegreen},
    keywordstyle=\color{magenta},
    numberstyle=\tiny\color{codegray},
    stringstyle=\color{codepurple},
    basicstyle=\ttfamily\footnotesize,
    breakatwhitespace=false,         
    breaklines=true,                 
    captionpos=b,                    
    keepspaces=true,                 
    numbers=left,                    
    numbersep=5pt,                  
    showspaces=false,                
    showstringspaces=false,
    showtabs=false,                  
    tabsize=2
}

\usepackage[textsize=tiny]{todonotes}

\icmltitlerunning{Elucidating the Design Space of Flow Matching for Cell Microscopy}
\begin{document}

\twocolumn[
  \icmltitle{Elucidating the Design Space of Flow Matching for Cellular Microscopy}
  \icmlsetsymbol{work_done_at_valence}{\textdagger}
  \icmlsetsymbol{equal_advising}{*}

  \begin{icmlauthorlist}
    \icmlauthor{Charles Jones}{imperial,work_done_at_valence}
    \icmlauthor{Emmanuel Noutahi}{valence}
    \icmlauthor{Jason Hartford}{valence,manchester,equal_advising}
    \icmlauthor{Cian Eastwood}{work_done_at_valence,equal_advising}
  \end{icmlauthorlist}

  \icmlaffiliation{imperial}{Imperial College London}
  \icmlaffiliation{valence}{Valence Labs}
  \icmlaffiliation{manchester}{University of Manchester}

  \icmlcorrespondingauthor{Cian Eastwood, Jason Hartford}{cian.eastwood@gmail.com, jason@valencelabs.com}

  \icmlkeywords{Machine Learning, ICML, Flow matching, Generative Models, Virtual Cells}

  \vskip 0.5in
]

\printAffiliationsAndNotice{\textdagger Work done at Valence Labs. * Equal advising.}   

\begin{abstract}
  Flow-matching generative models are increasingly used to simulate cell responses to biological perturbations.
  However, the design space for building such models is large and underexplored. We systematically analyse the design space of flow matching models for cell-microscopy images, finding that many popular techniques are unnecessary and can even hurt performance. We develop a simple, stable, and scalable recipe which we use to train our foundation model. We scale our model to two orders of magnitude larger than prior methods, achieving a two-fold FID and ten-fold KID improvement over prior methods. We then fine-tune our model with pre-trained molecular embeddings to achieve state-of-the-art performance simulating responses to unseen molecules. 
  Code is available at \href{https://github.com/valence-labs/microscopy-flow-matching}{github.com/valence-labs/microscopy-flow-matching}.
  
\end{abstract}

\section{Introduction}

High-throughput screens with high-content readouts have become indispensable tools in modern biology, providing massive datasets that capture high-dimensional cellular states across diverse conditions \citep{ljosa2012annotated,sypetkowski2023rxrx1,fay2023rxrx3}. However, despite this wealth of data, exhaustive physical experimentation remains prohibitively expensive due to the combinatorial complexity of biological systems (e.g. exploring cell perturbations under all contexts, all gene combinations, or all of chemical space). As a result, there is an increasing effort to build `virtual cell' models~\citep{slepchenko2003quantitative,bunne2024build,noutahi2025virtual} to simulate cellular responses to unseen stimuli and novel experimental settings \textit{in silico}. Successfully achieving this vision would have a transformative impact on science and drug discovery, enabling potential therapies to be screened virtually.
\begin{figure}[ht]
    \centering
    
    \includegraphics[width=\linewidth]{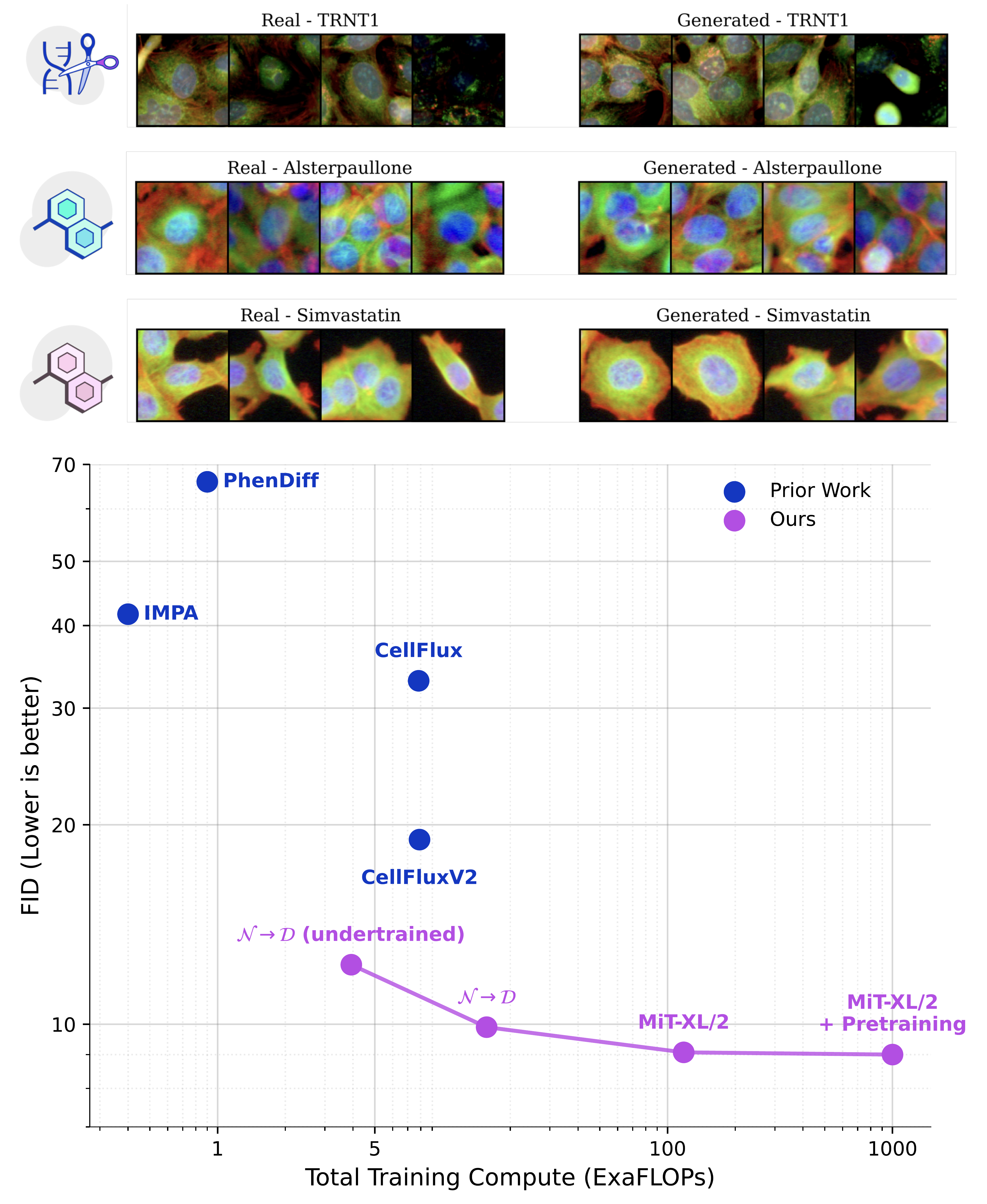}
    \caption{We ablate, simplify, and scale flow matching for generative modelling of cellular microscopy, achieving state-of-the-art performance on: (\textbf{top images}) the RxRx1 genetic intervention benchmark; (\textbf{middle images}) the BBBC021 small molecule perturbation benchmark; and (\textbf{bottom images}) the BBBC021 unseen molecule virtual screening task. In the \textbf{bottom plot}, we display performance vs.\ total training compute on the RxRx1 benchmark.}  
    \label{fig:scaling-fig1}
\end{figure}
While much of this attention has focused on transcriptomics data, there is a growing body of work focused on image-based profiling \citep{bray2016cell}, which offers a cost-effective and spatially-resolved view of cellular phenotypes \citep{palma2025predicting,zhang2025cellflux}. In particular, following notable successes in natural image generation and editing tasks \citep{dhariwal2021diffusion}, diffusion and flow matching generative models have emerged as tools for generating high fidelity images of cellular morphology under biological perturbations \citep{zhang2025cellflux,Klein2025.04.11.648220}. However, despite some promising initial results, the design space for these models remains underexplored, with little consensus on best training practices.

Inspired by recent work in natural imaging \citep{karras2022elucidating,karras2024analyzing,ma2024sit}, we systematically analyse the design space of flow matching generative models for cellular microscopy.
Our contributions are:

\begin{itemize}
    \item[\cref{sec:theory}] \textbf{Theory \space} We show that performance in virtual screening tasks is bounded by a two-term error decomposition: generative model error (on seen perturbations) and perturbation representation error. We then independently optimise each term in \cref{sec:iid} and \cref{sec:simulation}, respectively.

    \item[\cref{sec:iid}]%
    \textbf{Extensive ablation \space} We find that prior models are poorly trained and that many intuitive methods (e.g. control-to-perturbed flows, optimal-transport matching) are unnecessary and can even degrade performance. By rectifying this, we show that it is possible to achieve a two-fold FID and ten-fold KID improvements over prior state-of-the-art on the RxRx1 genetic-intervention benchmark.

    \item[\cref{sec:simulation}] \textbf{Unseen molecules \space} By fine-tuning with different molecular embeddings, we evaluate their effectiveness for simulating unseen molecular perturbations on the BBBC021 benchmark, attaining state-of-the-art performance with a large pre-trained graph transformer.
\end{itemize}

\section{Related work}

We provide a brief overview of relevant related work. For readers interested in a more comprehensive background on diffusion fundamentals, we recommend the monograph by \citet{lai2025principlesdiffusionmodels}. Similarly, \citet{li2025flow} provide a survey on further applications of flow matching in life sciences.

\paragraph{Diffusion and flow matching generative models}

Image-generative modelling has a large body of literature focusing on methods such as GANs \citep{NIPS2014_f033ed80} and VAEs \citep{kingma2022autoencodingvariationalbayes,pmlr-v32-rezende14}. However, in recent years, (score-based) diffusion models \citep{sohl2015deep,NEURIPS2019_3001ef25,NEURIPS2020_4c5bcfec} have come to dominate the field, generating high fidelity images \citep{dhariwal2021diffusion} with stable and scalable training. More recently, \citet{lipman2023flow}, contemporaneously with \citet{albergo2023stochastic} and \citet{liu2023flow}, proposed flow matching, providing a more general formulation of distribution-to-distribution modelling which \citet{albergo2023stochastic} prove is equivalent to diffusion with a Gaussian source distribution. Since then, improvements in architecture \citep{peebles2023scalable,karras2024analyzing} and training methods \citep{karras2022elucidating,ma2024sit} have further established diffusion and Gaussian flow matching methods as state-of-the-art in natural image generation and editing tasks \citep{xie2024sanaefficienthighresolutionimage,labs2025flux1kontextflowmatching}.

\paragraph{High-throughput phenotypic screening}

Rising costs related to the conventional approach of target-based drug discovery \citep{scannell2012diagnosing,DIMASI201620} have led many labs to invest in the alternative approach of phenotypic screening \citep{broach1996high,swinney2011were}, which involves collecting and analysing large volumes of data on the effects of biological perturbations on cells. In particular, as large-scale cell-microscopy datasets have become available (e.g. BBBC021~\citealt{ljosa2012annotated}, RxRx\{1,3\}~\citealt{sypetkowski2023rxrx1,fay2023rxrx3}, and JUMP-CP~\citealt{Chandrasekaran2023.03.23.534023}), we have increasingly seen computer vision methods applied in drug discovery, with tasks like batch effect correction \citep{sypetkowski2023rxrx1}, microscopy representation learning \citep{kraus2024masked,kenyon-dean2025vitally}, and molecular representation learning \citep{sypetkowski2024scalability} each benefitting from scaling compute and data.

\paragraph{Biological perturbation effect prediction} One task of particular interest in science and drug discovery is predicting effects of perturbations on cells such as compounds and gene knockouts. For example, mechanism of action~\citep{10.1007/978-3-030-87589-3_58}, toxicity \citep{NYFFELER2020114876}, and activity labels \citep{doi:10.1021/acs.jcim.8b00670} may all be treated as supervised prediction tasks. More recently, we have seen interest in treating perturbation effect prediction as a generative modelling task, with the ultimate goal of modelling the distribution of cell responses in both microscopy~\citep{bourou2024phendiff,navidi2025morphodiff,palma2025predicting} and transcriptomic~\citep{Klein2025.04.11.648220,Adduri2025.06.26.661135} data. Particularly relevant to our work is CellFlux \citep{zhang2025cellflux}, which applies flow matching methods to attain (prior) state-of-the-art results in generative microscopy. Similarly, CellFluxV2~\citep{Zhangcellfluxv2} improves the CellFlux framework and is concurrent with our work.

\section{Perturbation effect prediction is a two-task problem}\label{sec:theory}

The standard generative modelling task involves learning to generate the data distribution, conditioned on factors observed at train-time. However, to be useful in science and drug discovery, it is not enough to only generate cells under known perturbations. If we wish to use our models for virtual screening, we must also be able to generate cells under unseen perturbations. In this section, we formalise this problem, deriving a two-term error decomposition that will inform our modelling choices later.

In high-throughput screening, every perturbation $x$ (e.g. a drug or gene knockout) induces a distribution of outcomes $P(Y \mid x)$, where $y$ represents an image (or other measurable outcome, such as transcriptome) of the cells.
While we are primarily interested in modelling the distribution of phenotypes $P(Y \mid x')$ for some \emph{unseen} perturbation $x'$, we can decompose this problem into two subproblems:
(i) phenotypic response modelling and (ii) perturbation generalization.
The first problem (i) requires us to model what cells `look like' as a function of an embedding for \emph{seen} perturbations. The focus is on ensuring that the model is sufficiently expressive to reflect the space of morphological variation that could be observed under unseen perturbations.
The second problem (ii) learns a mapping from a feature representation of the perturbation, $\phi(x)$, to the model's embedding space, enabling generalization to new perturbations.
Our error in estimating $P(Y \mid x')$ can then be bounded by the base modelling error plus a term that scales with the embedding prediction error.

We can formalise this idea as follows. Let $d_{\mathcal{P}}$ denote a (pseudo-)metric on the space of phenotypic response distributions.\footnote{For example, $d_{\mathcal{P}}$ could be the maximum mean discrepancy (MMD) between two conditional distributions.}
Assume we model phenotype distributions using a conditional generative model, which induces a map $H:\mathcal{E}\rightarrow \mathcal{P}$ from an embedding space $\mathcal{E}$ to the space of outcome distributions $\mathcal{P}$.
For example, in the diffusion transformers used in this paper, the embedding corresponds to the conditioning vector that parameterizes the adaptive layer norms.

Finally, assume we learn a perturbation encoder $G:\mathcal{X}\rightarrow\mathcal{E}$ that maps a perturbation representation (we abuse notation and write this as $x$) to an embedding.
For each perturbation $x$, denote the true distribution of cellular responses by $p^*(x) := P(Y \mid x)$.

\paragraph{Assumption 1 (coverage).}
There exists a function $e^*:\mathcal{X}\to\mathcal{E}$ such that for all $x\in\mathcal{X}$,
\[
\Delta_{\text{model}}(x) := d_{\mathcal{P}} \bigl(p^*(x), H(e^*(x))\bigr) \le \epsilon_{\text{base}}.
\]

\paragraph{Assumption 2 (Lipschitz generator).}
There exists $U>0$ such that for all $e_1,e_2\in\mathcal{E}$,
\[
d_{\mathcal{P}}\bigl(H(e_1),H(e_2)\bigr) \le U \|e_1-e_2\|.
\]

\paragraph{Assumption 3 (learning error of the perturbation encoder).}
For $x\sim\mathcal{D}$, there exists $\epsilon_G\ge 0$ such that
\[
\mathbb{E}_{x\sim\mathcal{D}}\bigl[\|e^*(x)-G(x)\|\bigr]\le \epsilon_G.
\]

\begin{proposition}
Under Assumptions 1--3, for every $x\in\mathcal{X}$,
\[
d_{\mathcal{P}}\bigl(p^*(x), H(G(x))\bigr)
\le \epsilon_{\text{base}} + U \|e^*(x)-G(x)\|.
\]
Consequently,
\[
\mathbb{E}_{x\sim\mathcal{D}}\Bigl[d_{\mathcal{P}}\bigl(p^*(x), H(G(x))\bigr)\Bigr]
\le \epsilon_{\text{base}} + U\epsilon_G.
\]
\end{proposition}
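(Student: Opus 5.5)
The plan is to insert the ideal embedding $e^*(x)$ as an intermediate point and apply the triangle inequality for the pseudo-metric $d_{\mathcal{P}}$. A pseudo-metric still satisfies symmetry and the triangle inequality, and only these properties are needed. Fix any $x\in\mathcal{X}$ and write
\[
d_{\mathcal{P}}\bigl(p^*(x), H(G(x))\bigr) \le d_{\mathcal{P}}\bigl(p^*(x), H(e^*(x))\bigr) + d_{\mathcal{P}}\bigl(H(e^*(x)), H(G(x))\bigr).
\]
The first term is exactly $\Delta_{\text{model}}(x)$, which Assumption 1 bounds by $\epsilon_{\text{base}}$ uniformly in $x$. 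For the second term, apply Assumption 2 with $e_1=e^*(x)$ and $e_2=G(x)$, both of which lie in $\mathcal{E}$; this gives at most $U\|e^*(x)-G(x)\|$. Combining the two bounds yields the pointwise inequality.

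For the expectation statement, I will take $\mathbb{E}_{x\sim\mathcal{D}}$ of both sides of the pointwise bound, using monotonicity and linearity of expectation. Since $\epsilon_{\text{base}}$ is a constant and $U>0$, this gives
\[
\mathbb{E}_{x\sim\mathcal{D}}\bigl[d_{\mathcal{P}}(p^*(x),H(G(x)))\bigr] \le \epsilon_{\text{base}} + U\,\mathbb{E}_{x\sim\mathcal{D}}\bigl[\|e^*(x)-G(x)\|\bigr].
\]
Assumption 3 then bounds the last expectation by $\epsilon_G$, which finishes the argument.

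There is no genuine obstacle here. The only subtle point is measurability and integrability when taking expectations. I would handle it by assuming, implicitly as the statement does, that $x\mapsto d_{\mathcal{P}}(p^*(x),H(G(x)))$ is measurable. The expectation of this nonnegative function is then well defined, possibly $+\infty$, and monotonicity of the integral for nonnegative functions applies without any integrability hypothesis. The right-hand side is finite by Assumption 3, so the left-hand side is finite as well.
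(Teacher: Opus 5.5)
Your proof is correct and follows the paper's argument: insert $H(e^*(x))$ via the triangle inequality, bound the two terms by Assumptions 1 and 2, then take expectations and apply Assumption 3. Your remark on measurability and on monotonicity of the integral for nonnegative functions is a harmless addition that the paper leaves implicit.
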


\begin{proof}
Fix $x$. %
By the triangle inequality,
\begin{align*}
    d_{\mathcal{P}}\bigl(p^*(x),H(G(x))\bigr)
\le \,&d_{\mathcal{P}}\bigl(p^*(x),H(e^*(x))\bigr)\\
&+ d_{\mathcal{P}}\bigl(H(e^*(x)),H(G(x))\bigr).
\end{align*}
By Assumption 1, the first term is at most $\epsilon_{\text{base}}$, and the second term is at most $U\|e^*(x)-G(x)\|$ by Assumption 2.
Taking expectations over $x\sim\mathcal{D}$ and applying Assumption 3 completes the proof.
\end{proof}
The implication is that there are two tasks implicit in modelling perturbation effects:
(i) improving the base model's ability to represent phenotypic diversity (reducing $\epsilon_{\text{base}}$) and
(ii) improving the perturbation encoder's ability to predict embeddings for unseen perturbations (reducing $\epsilon_G$).
While there can be synergy when training end-to-end (since generative-model errors can be backpropagated into the encoder) we will treat each component separately in \cref{sec:iid,sec:simulation} to clarify their respective scopes for improvement.

\section{The generative microscopy design space} \label{sec:iid}

In \cref{sec:simulation}, we will develop generative models which simulate unseen biological perturbations. As we argued in \cref{sec:theory}, we wish to build these improvements on top of the strongest base model possible. We thus perform a set of experiments to systematically explore the design space of microscopy generative models (\cref{tab:rxrx1_perf}). Our best performing model (visualised in \cref{fig:rxrx1_samples,fig:rxrx1_samples_extra}) constitutes a substantial improvement over prior state-of-the-art in generating high-fidelity microscopy images.

We use the publicly available RxRx1 \citep{sypetkowski2023rxrx1} dataset,  evaluating Fréchet Inception Distance (FID) and Kernel Inception Distance (KID) on the held-out validation set from \citet{zhang2025cellflux}. Our experiments aim to explore and simplify five elements of the flow matching design space: conditioning, interpolants, data coupling, architecture, and pretraining. We provide details on datasets, training, and ablations in \cref{sec:iid_details}.

\begin{table*}[ht]
    \centering
    \caption{Systematically improving upon state-of-the-art microscopy generative modelling on RxRx1. Our final configuration achieves over two-fold FID improvements and ten-fold KID improvements over prior and contemporaneous work. ExaFLOPs measure the total training compute, as estimated in \cref{sec:iid_details}. KID numbers are scaled up by a factor of $10^3$ for ease of interpretation.}
    \begin{tabular}{@{}lccccccc@{}}
    \toprule
        & \textbf{Model} & \textbf{ExaFLOPs} & \textbf{Flow} & \textbf{OT} & \textbf{FID} $\downarrow$ & \textbf{KID} $\downarrow$ \\ \midrule
        PhenDiff \citep{bourou2024phendiff} & U-Net & 0.96 & $\mathcal{N \leftrightarrow D}$ & \xmark & 65.9 & 51.9 \\
        
        IMPA \citep{palma2025predicting} & StarGAN & 0.39 & - & -  & 41.6 & 29.1 \\ 
        
        CellFlux \citep{zhang2025cellflux} & ADM & 7.83 & $\mathcal{C \rightarrow P}$ & \xmark  & 33.0 & 23.8 \\ 

        CellFluxV2 \citep{Zhangcellfluxv2} & DiT-XL/2 & 6.89 & $\mathcal{C \rightarrow P}$ & \xmark  & 19.0 & 9.30 \\ \midrule
        
        Ours -- \textit{Config A} & ADM & 4.70 & $\mathcal{C \rightarrow P}$ & \xmark & 29.5 & 14.0 \\

        Ours -- \textit{Config B (undertrained)} & ADM & 3.93 & $\mathcal{N \rightarrow D}$ & \xmark & 12.3 & 3.90 \\
        
        Ours -- \textit{Config B} & ADM & 15.7 & $\mathcal{N \rightarrow D}$ & \xmark & 9.90 & 1.52 \\
        
        Ours -- \textit{Config C} & ADM & 15.7 & $\mathcal{N \leftrightarrow D}$ & \xmark & 13.4 & 1.14 \\
        
        Ours -- \textit{Config D} & ADM & 15.7 & $\mathcal{N \rightarrow  D}$ & \cmark & 10.5 & 1.61 \\

         Ours -- \textit{Config E} & ADM & 15.7 & $\mathcal{N \leftrightarrow  D}$ & \cmark & 13.6 & 2.15\\
        
        \midrule
        
        \textit{Ours (best configuration)} & MiT-XL/2 & 118 & $\mathcal{N \rightarrow D}$ & \xmark & 9.07 & 0.84 \\
        
        \quad \textit{w/ counterfactual sampling} & MiT-XL/2 & 118 & $\mathcal{N \leftrightarrow D}$ & \xmark & 10.4 & \textbf{0.64}  \\
        \quad \textit{w/ pretraining} & MiT-XL/2 & 986 & $\mathcal{N \rightarrow D}$ & \xmark & \textbf{9.00} & 0.74 \\
        
    \bottomrule
    \end{tabular}
    \label{tab:rxrx1_perf}
\end{table*}

\begin{figure}[ht]
    \centering
    \includegraphics[width=\linewidth]{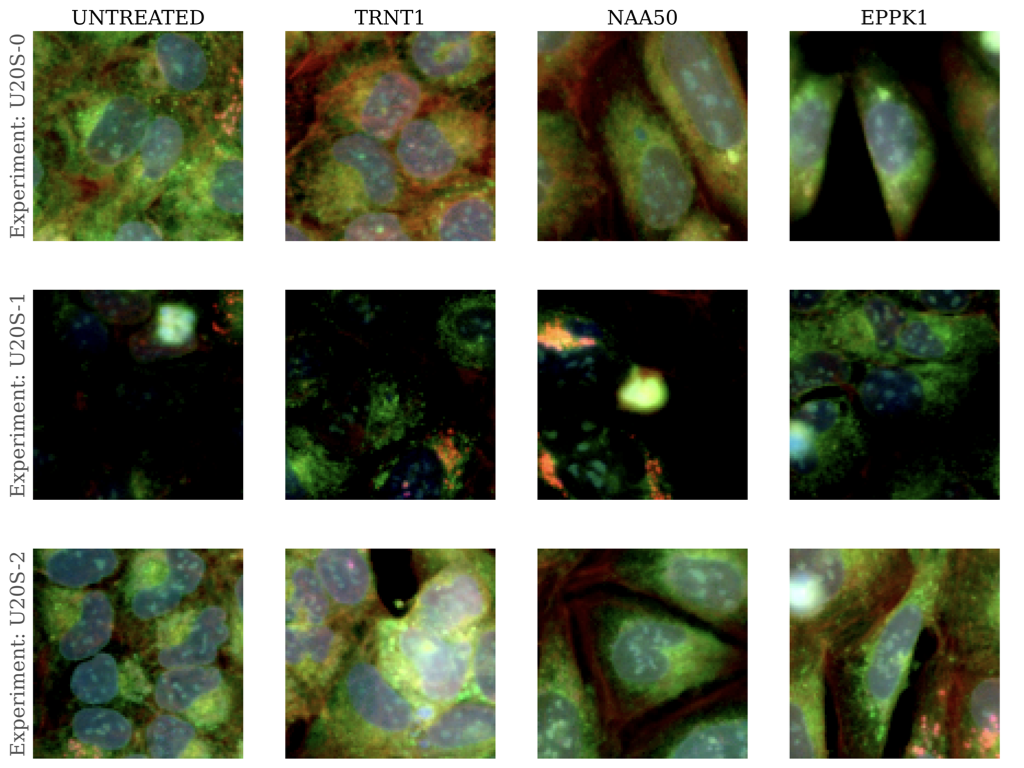}
    \caption{Samples from our best performing model on RxRx1. Each column represents a unique gene perturbation and each row represents a different biological experiment. The model generates distinct phenotypes for each perturbation whilst capturing biological batch effects. See \cref{fig:rxrx1_samples_extra} for side-by-side comparisons of real and generated images.}
    \label{fig:rxrx1_samples}
\end{figure}

\subsection{Conditioning} 

The prior state-of-the-art method for microscopy generative modelling \citep{zhang2025cellflux} uses the classic ADM U-Net architecture from \citet{dhariwal2021diffusion}, conditioned on gene2vec \citep{du2019gene2vec} embeddings of the SiRNA gene knockdowns in RxRx1. We begin by simplifying this conditioning setup by using one-hot embeddings of the perturbation label. While this strategy loses the ability to generalise to unseen perturbations, it is simpler and arguably more suitable for the seen setting (we will revisit perturbation encoders for unseen molecule generalisation in \cref{sec:simulation}). Importantly, we additionally condition on a one-hot embedding of the biological context (experiment label) to allow our models to capture the batch effects known in RxRx1 and other microscopy datasets \citep{sypetkowski2023rxrx1}. Our full conditioning strategy\footnote{Also note that, since the number of human genes is finite and relatively small (approximately 20k), it's feasible to run whole-genome knockout experiments \citep{fay2023rxrx3}. Thus, unlike compounds, which inhabit a much larger chemical space, there is little need to generalise to unseen gene knockdowns on RxRx1.} is visualised in \cref{fig:cond_strategy}. At train time, we independently drop each label with $0.15$ probability, allowing us the option to use classifier-free guidance \citep{ho2022classifier} and to sample from our models unconditionally. We report results for our simplified conditioning setup in \textit{Config A} -- despite closely matching other aspects of training to \citep{zhang2025cellflux}, we see a modest improvement in FID to $29.5$.

\begin{figure}[ht]
    \centering
    \includegraphics[width=\linewidth]{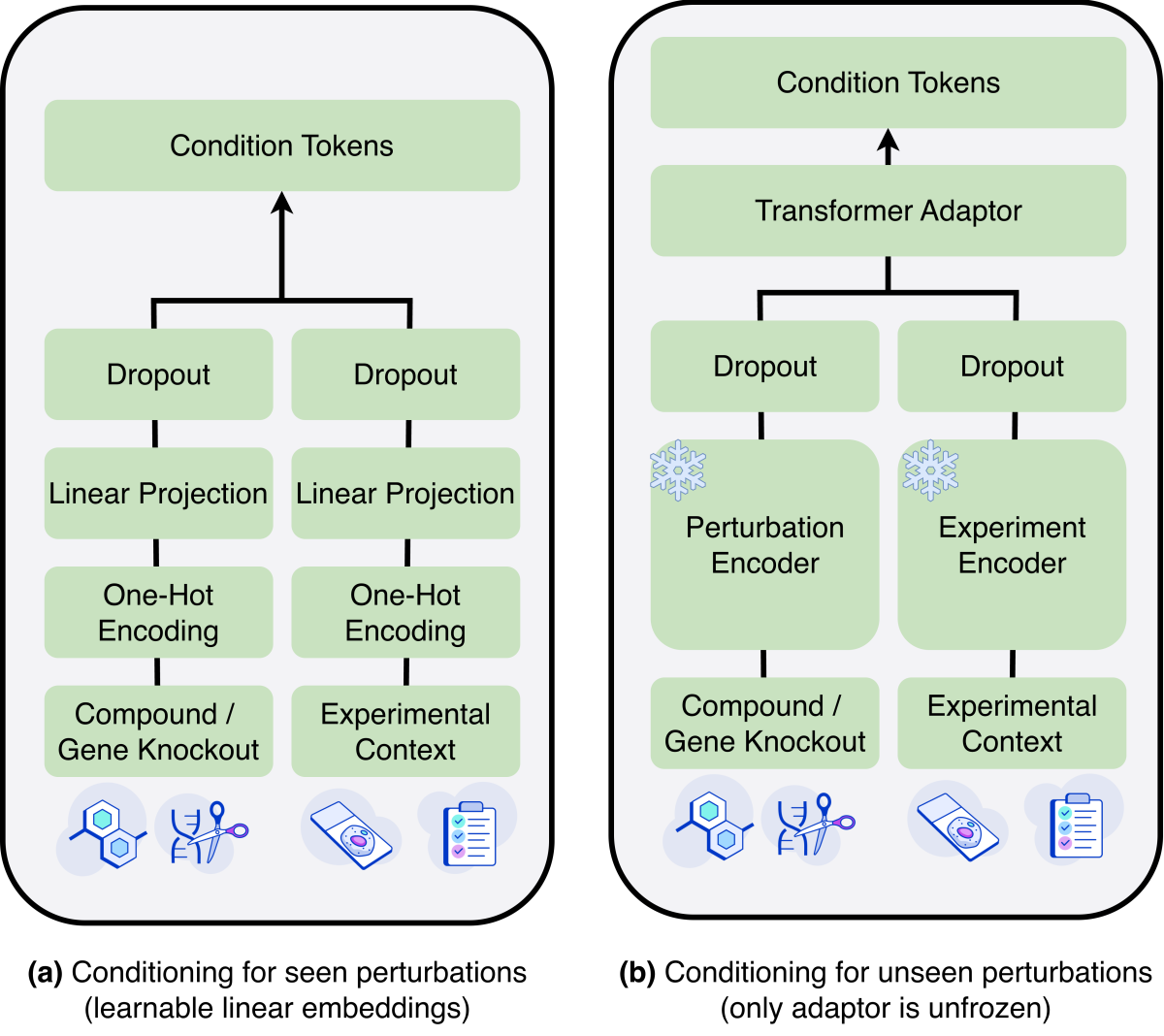}
    \caption{Conditioning strategy. While (pre)training on seen perturbations (a), we use learnable linear embeddings of the perturbation label. When finetuning for unseen perturbations in \cref{sec:simulation} (b), we freeze the base model and perturbation encoder. We train a small adaptor to align the embeddings to the model's conditioning.}
    \label{fig:cond_strategy}
\end{figure}

\subsection{Interpolant and flow construction} 

An important choice in flow matching generative models is the interpolant between source and target distributions. Standard practice in natural images is to choose a Gaussian source distribution, however, in microscopy data, we have access to control images -- cells which are left untreated or given a non-coding gene perturbation -- and it is intuitive to wish to use these as the source distribution.

In \textit{Config A}, we match the setup of \citet{zhang2025cellflux} by flowing from the control distribution to the perturbed distribution $\mathcal{C \rightarrow P}$, conditioned on the biological experiment. However, one undesirable property of this setup is that we cannot simulate a perturbation without a corresponding control image. In practice, we found the $\mathcal{C \rightarrow P}$ flow to be prone to overfitting and challenging to scale. 

In \textit{Configs} \{\textit{B,C}\}, we explore two alternative sampling strategies based on the more flexible approach of constructing a Gaussian noise to all-data flow $\mathcal{N \rightarrow D}$, where controls are simply treated as another perturbation class. Training with this flow enables us to use either of the following sampling strategies (summarised in \cref{fig:rxrx1_interpolants,fig:rxrx1_counterfactuals}):

\paragraph{Generation mode} ($\mathcal{N \rightarrow D}$) is akin to standard diffusion or flow matching and involves sampling $x_0 \sim \mathcal{N}(0,\mathbf{I})$, then solving the (neural) ODE $x_1 = x_0 + \int^1_0 v_\theta(x_t,t,c)\mathrm{d}t$, where $v_\theta(\dots)$ is the output of the model, mapping an image, conditioning, and time scalar to a predicted velocity field. Common practice is to approximate this integral with the Euler ODE solver, corresponding to DDIM diffusion \citep{song2021denoising}, however, any ODE or SDE solver may be used in-place with no changes required at train-time.

\paragraph{Counterfactual mode} ($\mathcal{N \leftrightarrow D}$) involves starting with a control image $x_c$, solving the reverse probability flow ODE for the corresponding noise, then decoding the noise forwards with new conditioning to a perturbed image ${x_1 = x_c +\int^0_1 v_\theta(x_t,t,\emptyset)\mathrm{d}t + \int^1_0 v_\theta(x_t,t,c)\mathrm{d}t}$. Unlike generation mode, which simply generates a cell with the given condition, counterfactual mode aims to answer the more challenging question of `what would \textit{this} cell look like under a given condition'. This is the strategy taken by \citet{bourou2024phendiff} -- we refer to it as \textit{counterfactual} generation as it closely resembles \textit{abduction-action-prediction} framework of \citet{pearl2009causality}\footnote{While we use the \textit{counterfactual} terminology to highlight the connections to causality, we do not make any theoretical causal claims about our capabilities. \citet{ribeiro2025counterfactual} provide a full treatment of the causal identifiability of flow matching models.}, common in image-editing and causality \citep{pawlowski2020deep,ribeiro2025counterfactual}.

\begin{figure}[ht]
    \centering
    \includegraphics[width=\linewidth]{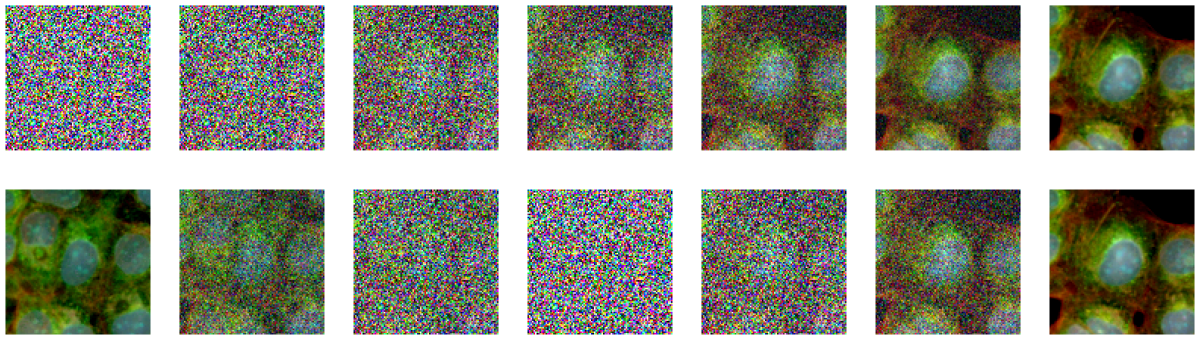}
    \caption{Generation vs. counterfactual mode sampling. Top row represents the flow trajectory for generation mode, starting with Gaussian noise and ending at a perturbed image. Bottom row represents the trajectory for counterfactual mode. This involves starting with a control image, solving the reverse probability flow ODE to compute its corresponding noise, then generating a perturbed image from the inferred noise in the usual way.}
    \label{fig:rxrx1_interpolants}
\end{figure}

\begin{figure}[ht]
    \centering
    \includegraphics[width=\linewidth]{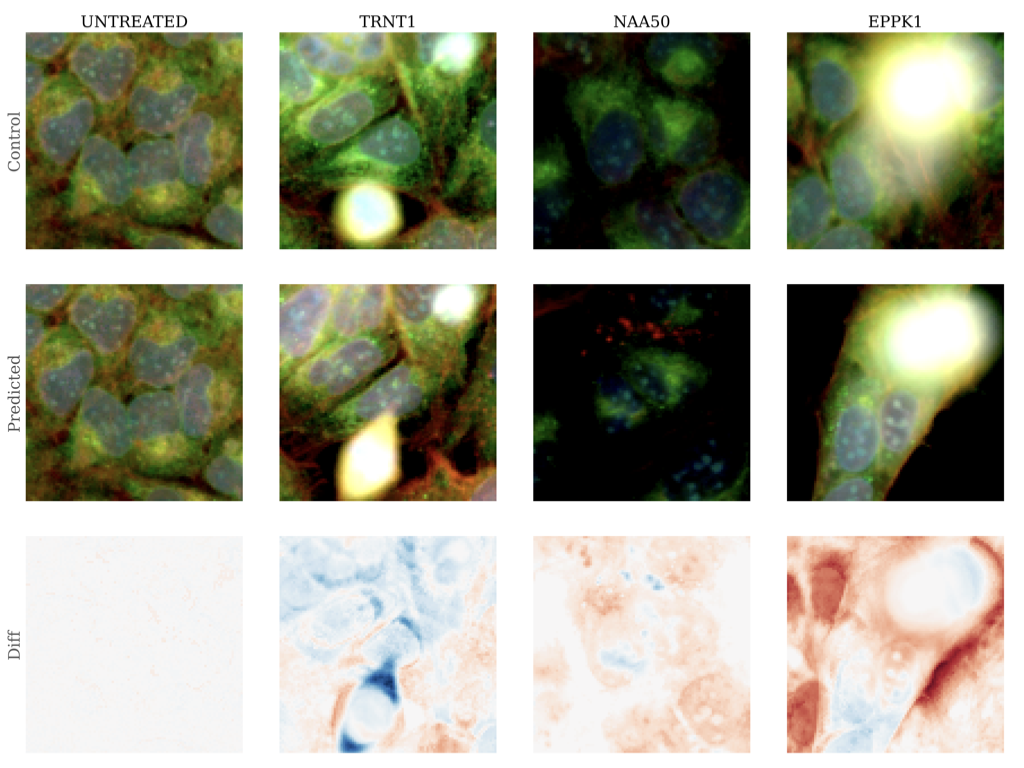}
    \caption{Counterfactual sampling on RxRx1. Top row represents real untreated control images. Middle row is the predicted counterfactual for the relevant perturbations (as labelled by the column headers). Bottom row represents the predicted individual treatment effect (difference of counterfactual to factual, averaged over channels). Our model is able to transform cell morphology whilst preserving features not relevant to the perturbation.}
    \label{fig:rxrx1_counterfactuals}
\end{figure}

Importantly, both $\mathcal{N \rightarrow D}$ and $\mathcal{N \leftrightarrow D}$ are identical at train-time, allowing us to train one model and decide whether or not to use the control images at inference time. Switching to this training strategy gives us a substantial boost in FID to 9.90 (\textit{Config B}; $\mathcal{N \rightarrow D}$) and 13.4 (\textit{Config C}; $\mathcal{N \leftrightarrow D}$), respectively. Our results indicate that, while $\mathcal{C \rightarrow P}$ setups appear to model cell trajectories over time, they are harder to train and may lead to unnecessarily degraded performance. We additionally include results of an undertrained variant of \textit{Config B}, demonstrating that the improvements are not simply due to training for longer. We further discuss the failure of $\mathcal{C \rightarrow P}$ in \cref{sec:addressing_misconceptions}, where we include additional ablations over the form of the interpolant at train-time, finding that the standard linear interpolant attains the best results.

\subsection{Coupling}

Conventional flow matching methods train with independent random couplings between samples in the source and target distributions. One alternative, used by \citet{Klein2025.04.11.648220}, is to instead use optimal transport (OT) couplings \citep{pooladian2023multisample,tong2023improving}. OT couplings are motivated by the desire for straighter probability paths between samples, leading to improved performance and lower sampling costs. In \textit{Configs} \{\textit{D,E}\}, we test the OT strategy, finding no benefits in performance or sampling efficiency.

\subsection{Architecture and pretraining} \label{sec:pretraining}

Thus far, we have approximately matched compute and data with \citet{zhang2025cellflux} by selecting the same architecture and training only on RxRx1. For our final models, we will scale both compute and data to build the largest generative-microscopy models to date. We begin by swapping out the ADM U-Net for the larger and more modern diffusion transformer \citep[DiT;][]{peebles2023scalable} to scale our parameter count and FLOPs per sample. Where the ADM architecture had 50M parameters and 200 GFLOPs per forward pass per $96\times96\times6$ microscopy image, our scaled diffusion transformer uses approximately 700M parameters and one TFLOP per forward pass per image.

Unlike latent diffusion on natural images, where diffusion transformers typically need little regularisation, we observed instability when training on microscopy data in pixel-space. We found that three small adjustments to the vanilla diffusion transformer helped prevent divergence and improve memory footprint and bandwidth. We refer to our final architecture as the \textit{microscopy transformer} (MiT), visualised in \cref{fig:microscopy_transformer_arch}. Training with our scaled MiT model improves FID to 9.07 on RxRx1. We provide full details of these adjustments and our stabilisation ablations in \cref{sec:iid_details}.

\begin{figure}[ht]
    \centering
    \includegraphics[width=\linewidth]{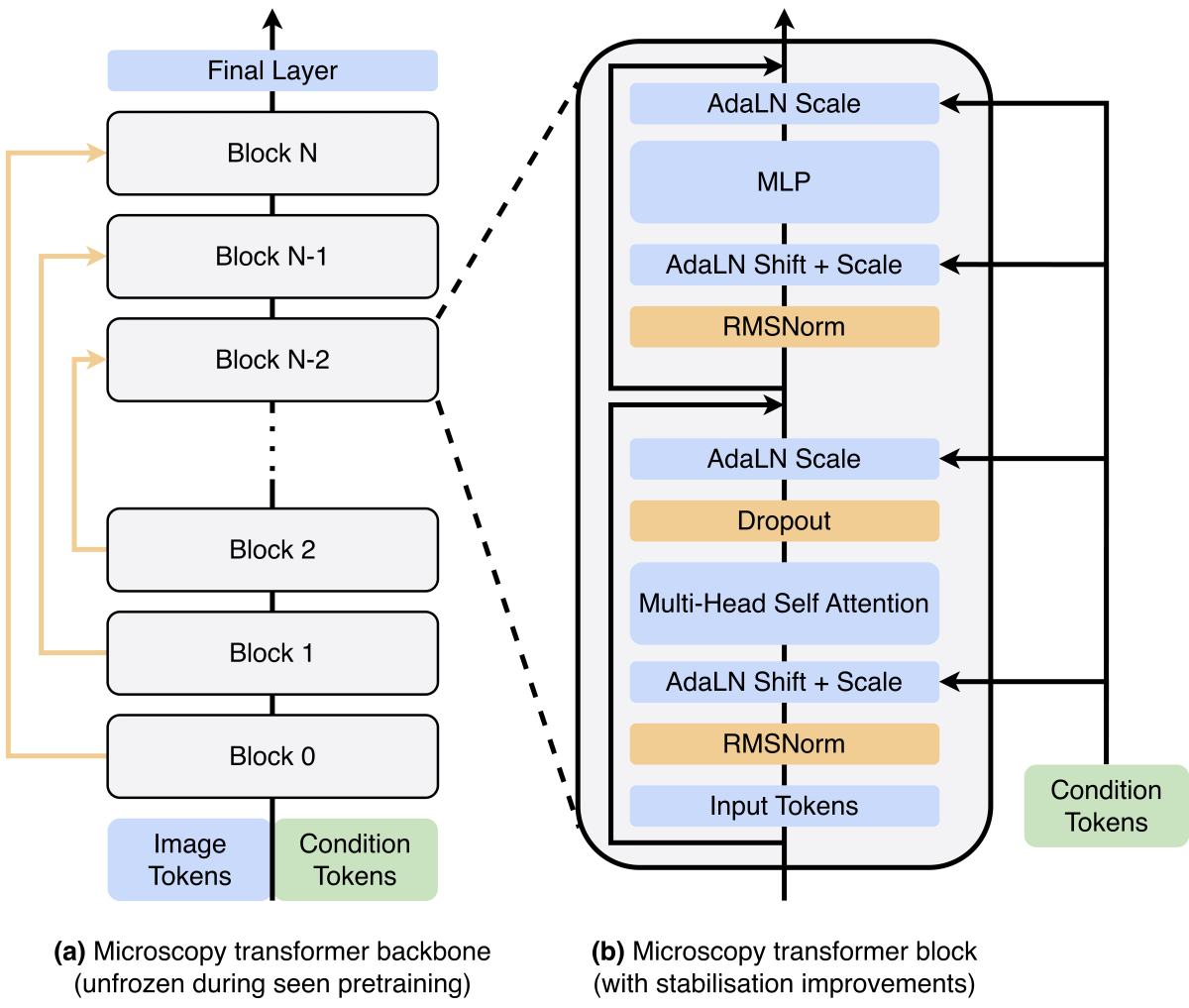}
    \caption{Our microscopy transformer (MiT) architecture. Differences to the standard diffusion transformer include long-range skip connections, RMSNorm, and dropout (all highlighted in orange).}
    \label{fig:microscopy_transformer_arch}
\end{figure}

Next, we improve the base model by scaling data and training compute. The RxRx1 dataset is small by today's standards (171k images), so we pretrain on the Phenoprints dataset curated by \citet{kenyon-dean2025vitally}, one of the largest microscopy datasets in existence. We pretrain on a subset of Phenoprints consisting of approximately 600k biological perturbations applied to 10B cells across 12M well images. We pretrain the model for 1M steps on Phenoprints and finetune for 250k steps on RxRx1; this cost approximately one ZettaFLOP of training compute, two orders of magnitude greater than prior state-of-the-art microscopy models and on par with modern high-fidelity models in natural imaging \citep{peebles2023scalable,karras2024analyzing}. Our final model attains a further improved 9.00 FID and 0.74 KID in generation mode.

\section{Virtual screening of unseen molecules} \label{sec:simulation}

In \cref{sec:iid}, we developed the largest and highest-fidelity generative models for microscopy imaging to date (thus, aiming to minimise $\epsilon_{\text{base}}$ from \cref{sec:theory}). However, for our model to accurately predict effects of unseen perturbations, we must also minimise $\epsilon_G$. This is a challenging task; expecting a model to generate a truly unique biological state from a new molecule would be analogous to asking a natural image generator to generate a beagle having never seen a dog in its training data. Relying on our assumption in \cref{sec:theory} that valid phenotypes lie on a shared manifold, we focus on finetuning the model to be conditioned on molecule representations.

To evaluate this, we use the publicly available BBBC021 dataset \citep{caie2010high,ljosa2012annotated}, evaluating FID and KID on the seen and unseen held-out validation splits from \citet{zhang2025cellflux}. The seen split includes held-out images of molecule perturbations seen at training time, whereas the unseen split contains completely new molecule perturbations. We will explore four strategies for molecule conditioning: one-hot, unconditional, Morgan fingerprints, and MolGPS \citep{sypetkowski2024scalability}. We include further details on the BBBC021 dataset and experiments in \cref{sec:ood_details}.

\begin{table*}[ht]
    \centering
    \caption{Simulating effects of seen and unseen molecule perturbations on BBBC021. KID numbers are scaled up by a factor of $10^3$ for ease of interpretation. Note that, since FID and KID are biased metrics, they cannot directly be compared across the seen and unseen columns because they are computed over different numbers of samples.}
    \begin{tabular}{@{}lccccc@{}}
    \toprule
        \multirow{2}{*}{} & \multirow{2}{*}{\textbf{Conditioning}} & \multicolumn{2}{c}{\textbf{Seen}} & \multicolumn{2}{c}{\textbf{Unseen}} \\ 
        \cmidrule(lr){3-4} \cmidrule(lr){5-6}
        & & \textbf{FID} $\downarrow$ & \textbf{KID} $\downarrow$ & \textbf{FID} $\downarrow$ & \textbf{KID} $\downarrow$ \\ \midrule
        MorphoDiff \citep{navidi2025morphodiff} & RDKit2D & 65.8 & 79.9 & - & - \\
        PhenDiff \citep{bourou2024phendiff} & One-Hot & 49.5 & 31.0 & 67.7 & 34.5 \\
  
        IMPA \citep{palma2025predicting} & Morgan Fingerprints & 33.7 & 26.0 & 44.5 & 30.7  \\ 
        
        CellFlux \citep{zhang2025cellflux} & Morgan Fingerprints & 18.7 & 16.2 & 42.0 & 13.1 \\

        CellFluxV2 \citep{Zhangcellfluxv2} & Morgan Fingerprints & 7.90 & 2.30 & 19.2 & \textbf{8.00} \\
        
        \midrule
        
        Ours -- \textit{one-hot} & One-Hot & \textbf{4.03} & 0.31 & -  & - \\
        Ours -- \textit{unconditional} & Unconditional & 4.63 & 0.68 & 22.7 & 13.5 \\
        Ours -- \textit{Morgan fingerprints} & Morgan Fingerprints & 4.16 & 0.18 & 19.9 & 12.0 \\
        Ours -- \textit{MolGPS} & MolGPS & 4.12 & \textbf{0.16}  & \textbf{18.5} & 9.95 \\        
    \bottomrule
    \end{tabular}
    \label{tab:bbbc_perf}
\end{table*}

\subsection{Seen molecules}

We begin by taking the recipe we developed in \cref{sec:iid} and re-training it from scratch on the BBBC021 dataset. On the seen molecules split, we attain an FID of 4.03 and KID of 0.31, which is broadly consistent with the 2-10$\times$ improvements over prior state-of-the-art that we observed on RxRx1. We report results in \cref{tab:bbbc_perf}, and provide side-by-side comparisons of real and generated images in \cref{fig:bbbc_iid_samples_extra}.

\subsection{Unseen molecules: exploring embeddings}

Since our model uses one-hot perturbation embeddings by default (see \cref{fig:cond_strategy}), it cannot be directly used to predict effects of unseen perturbations. We thus explore three strategies for virtual screening of unseen molecules. We present further comparisons of real and generated images in \cref{fig:bbbc_ood_samples_extra}.

\paragraph{Unconditional} The first method we consider for simulating unseen perturbations is to simply sample unconditionally from our model (this is possible because we trained our model with conditioning dropout to enable classifier-free guidance \citep{ho2022classifier}). While this is a naive strategy that does not allow for control over the sampled phenotypes, it attains an FID of 22.7, which surprisingly beats all prior methods barring CellFluxV2. One explanation for this result is that the reduction in sampling control is more than compensated for by the capabilities of our base model, which simply generates more realistic images (i.e. improvements in $\epsilon_{\text{base}}$ are compensating for high $\epsilon_G$).

\paragraph{Morgan fingerprints} Next, we investigate Morgan fingerprint embeddings, as used by IMPA, CellFlux, and CellFluxV2. Following our conditioning strategy in \cref{fig:cond_strategy}, we freeze our base model and train a 100M parameter adaptor to map between Morgan fingerprints and the base model's conditioning space. This setup attains an FID of 19.9, approximately on-par with CellFluxV2, the prior state-of-the-art method. Morgan fingerprint conditioning only results in marginally improved FID over unconditional sampling, perhaps indicating that these embeddings provide the model little additional information relevant to predicting the morphological effects of drugs.

\paragraph{MolGPS} Finally, we repeat the previous experiment, replacing Morgan fingerprints with MolGPS~\citep{sypetkowski2024scalability} embeddings of the molecules. MolGPS is a pretrained 3B-parameter graph-transformer designed for molecular representation learning. We find that conditioning on MolGPS embeddings leads to our best results, with a state-of-the-art FID of 18.5 and a KID of 9.95.

Potentially noteworthy is how our model attains 2-10$\times$ improvements over state-of-the-art on seen perturbations, however, even with MolGPS (one of the most advanced molecular representation learning models) conditioning, the gap between our model's performance and other methods is much smaller. This may indicate that molecular representations are the main bottleneck in improving virtual screening performance.

\subsection{Qualitative assessment of biological fidelity}

\begin{figure*}[htbp]
    \centering
    \includegraphics[width=\linewidth]{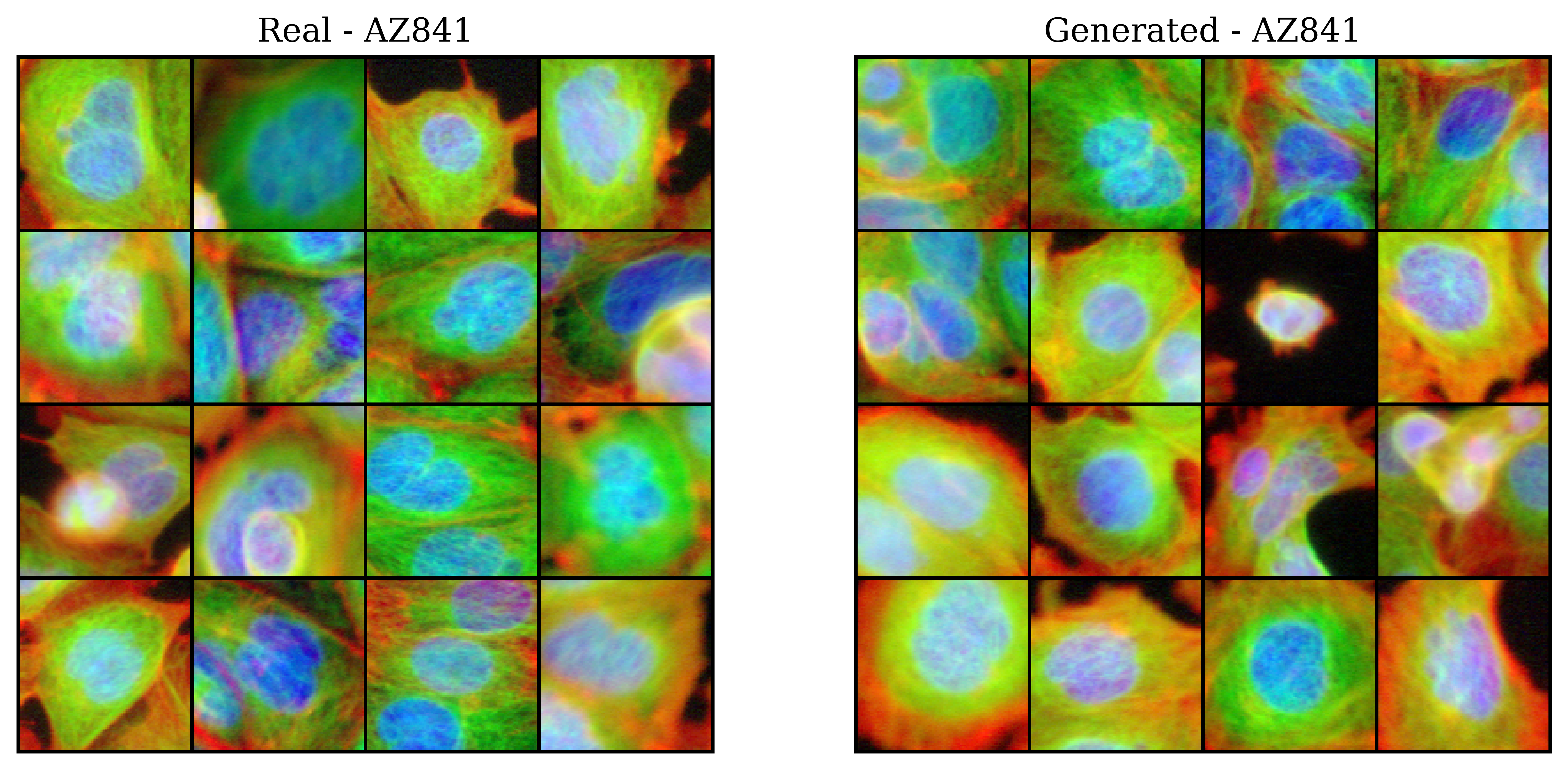}

    \caption{Randomly sampled real and generated images for virtual screening on BBBC021 -- the AZ841 molecule was not seen by the model during training. Samples are generated using our model (with Morgan fingerprints).}
    \label{fig:az841_review}
\end{figure*}

Beyond generative modelling metrics, it is important that generated samples reproduce the specific morphological phenotypes associated with chemical perturbations. We analyse the biological plausibility of generated images for the unseen AZ841 perturbation, an Aurora kinase inhibitor.

The generated composites in \cref{fig:az841_review} exhibit closely matched nuclear and cytoplasmic organization, with intact cell boundaries (WGA/ConA; yellow/green), preserved actin architecture (Phalloidin; red), and comparable nuclear size and RNA-associated signal patterns (Hoechst/Syto14; blue/cyan). This is consistent with a non-lethal, cell-cycle–modulating response and is in line with established effects of Aurora kinase inhibition, which induces mitotic arrest and cytokinesis failure leading to polyploidy, multinucleation, micronuclei formation, and increased chromatin heterogeneity \citep{gautschi2008aurora,goldenson2015aurora,bavetsias2015aurora}.

Both real and generated samples display a heterogeneous nuclear phenotype with variably-sized nuclei, including condensed mitotic forms and enlarged or irregular nuclei, as well as altered nuclear packing. The generated images reproduce the major spatial co-localisation patterns across channels and the degree of single-cell morphological variability observed in the real data, indicating that the principal perturbation-specific cellular morphology associated with Aurora kinase blockade is captured. Residual discrepancies are confined mainly to fine-scale structural detail, with smoother chromatin and cytoskeletal textures, as well as the occasional minor intensity-artefact in generated samples.

\section{Discussion} \label{sec:discussion}

In \cref{sec:theory} we provided a theoretical treatment of the task of virtual screening, deriving an error decomposition that directly motivated \cref{sec:iid,sec:simulation}. In \cref{sec:iid}, we performed extensive empirical ablation and optimisation to develop the largest and highest-fidelity generative-microscopy model to date. In \cref{sec:simulation}, we finetuned our model with two molecular encoders for the task of simulating unseen molecule perturbations, achieving state-of-the-art results and validating graph transformers as a method of molecular representation learning. We discuss three takeaways from our work.

\paragraph{Domain-specific methods may not be necessary} Prior flow matching methods for generative microscopy~\citep{zhang2025cellflux,Zhangcellfluxv2} diverge from standard generative modelling methods due to domain-specific assumptions about how to condition on biological context (see \cref{sec:addressing_misconceptions} for a more involved discussion). Through the largest-scale ablations to date, we found that while these approaches are intuitively appealing, they provided no benefit over simpler methods that are standard in the generative modelling community. By rejecting domain-specific intuitions and aligning our work with conventional generative modelling, we benefit from advances and implementations in this wider community. %

\paragraph{Molecular representations are key to virtual screening} The ultimate goal of generative microscopy is to realise the vision of virtual screening \citep{carpenter2007image} by simulating effects of novel candidate drugs on cells. One avenue for improving unseen simulation capabilities is to improve the generative model itself. For example, our unconditional model beats almost all prior methods (aside from CellFluxV2) simply due to the fact that it generates more realistic images. However, to achieve truly interesting capabilities, we need to condition the model on molecule representations. By conditioning our model on MolGPS \citep{sypetkowski2024scalability}, a 3B parameter pretrained graph transformer, we achieve state-of-the-art results for simulating effects unseen compounds, improving over the prior method of Morgan fingerprint embeddings. As methods for molecular representation learning improve, we expect to see applications for improved virtual screening.

\paragraph{Existing benchmarks are beginning to saturate} For consistency and comparison with prior work, we report all results on the standard, open, RxRx1 and BBC021 benchmarks. However, these benchmarks are small in number of images, phenotypic diversity (i.e. cell types and biological batches), and number of perturbations. Particularly concerning is how our pretrained model variant, despite showing promising signs on the Phenoprints pretraining dataset (consisting of approximately 10B cells across over 600k perturbations), provided only modest FID improvements over the non-pretrained variant on RxRx1. It's likely that these are early signs that the RxRx1 and BBBC021 benchmarks are beginning to saturate. While these datasets remain useful for prototyping, we encourage researchers to consider the openly available RxRx3 \citep{fay2023rxrx3} dataset as a larger, more biologically diverse alternative for benchmarking.

\section*{Acknowledgements}
We thank Sarah Cechnicka for feedback on a draft version of this paper, as well as Kerstin Kl\"aser and Francesco Di Giovanni for productive conversations. C.J. is supported by Microsoft Research and EPSRC through a Microsoft PhD scholarship.

\section*{Impact Statement}

This paper presents work whose goal is to advance the field of machine learning, with applications in science and drug discovery. While our work is specifically aimed at accelerating the discovery of new medicines, there may be many further societal consequences of our work, none of
which we feel must be specifically highlighted here.

\bibliography{refs}
\bibliographystyle{icml2026}

\newpage
\appendix
\onecolumn

\section{Supplementary material for \S \ref{sec:iid} (RxRx1 experiments)} \label{sec:iid_details}

\subsection{The RxRx1 dataset} 

The RxRx1 dataset consists of $125$k $2048\times2028\times6$ images of wells collected with the cell-painting protocol \citep{bray2016cell}. Each well contains approximately $750$ cells treated with one of $1108$ distinct SiRNA gene knockdown perturbations. The dataset was collected over $51$ identically designed biological experiments (sometimes referred to as biological `batches' or `contexts'), with \citet{sypetkowski2023rxrx1} noticing significant variation, or `batch effects' across experiments. We use identical preprocessing to \citet{palma2025predicting}\footnote{The preprocessed dataset is available for download at: \url{zenodo.org/records/8307629}.}\footnote{This split is useful for like-for-like comparisons, but sacrifices phenotypic diversity by filtering to a single cell line.}, resulting in $170,943$ images of the U2OS cell line, with $1071$ unique perturbations from three experiments. Each image is illumination-corrected and cropped to $96 \times 96 \times 6$ resolution centred on a cell nucleus.

For consistency with prior benchmarks, we evaluate FID and KID on the held-out validation split defined by \citet{zhang2025cellflux}, consisting of 2942 images from a subset of 100 perturbations, and 120 additional untreated images for use as controls. For evaluation, we generate 2942 images, exactly mirroring the composition of perturbations and experiments as the validation split. All models are trained on the native 6-channel images, but evaluated in RGB space using the projection function in Listing \ref{listing:cp2rgb}. We verified our evaluation pipeline with the generated images from \citet{zhang2025cellflux}\footnote{Available for download at: \url{huggingface.co/suyc21/CellFlux}}, reproducing their reported FID to within $0.01$ ($33.01$) and KID to within $0.0003$ ($0.0235$).

\begin{lstlisting}[language=Python, caption=Cell-paint to RGB projection used for RxRx1 model evaluation., label={listing:cp2rgb}]
import torch

def cp2rgb(images):
    """Projects 6-channel cell-paint to 3-channel RGB.
    Input: torch.float32 tensor, shape B,6,H,W.
    Output: torch.float32 tensor, shape B,3,H,W.
    """
    weights = torch.tensor(
        [
            [0.0, 0.0, 1.0],  # Hoechst -> Blue
            [0.0, 1.0, 0.0],  # ConA -> Green
            [1.0, 0.0, 0.0],  # Phalloidin -> Red
            [0.0, 0.5, 0.5],  # Syto14 -> Cyan
            [0.5, 0.0, 0.5],  # MitoTracker -> Magenta
            [0.5, 0.5, 0.0],  # WGA -> Yellow
        ],
        dtype=images.dtype,
        device=images.device,
    )
    images_rgb = torch.einsum("bchw,cn->bnhw", images, weights)
    return images_rgb.contiguous()
\end{lstlisting} 

\subsection{Training and inference} 

We train all RxRx1 models on the task of continuous-time velocity prediction. We train in BF16 precision on 8 NVIDIA H100 GPUs (data parallel) for 100k steps using the Adam optimiser \citep{kingma2014adam}. We run inference on all models using the \citet{karras2024analyzing} power-function EMA formulation and standard classifier-free guidance \citep{ho2022classifier}. We sample using the \citet{dormand1980family} 5th-order Runge-Kutta ODE solver (DoPri5), which typically uses $\approx 300$ function evaluations ($\times2$ when using guidance $\not = 1.0$). These are generally sensible inference-time defaults, however, it is likely possible to squeeze more performance out of our base models by jointly tuning the EMA and guidance \citep{karras2024analyzing}, as well as using more solver steps (or an SDE solver in place of our ODE solver \citep{ma2024sit}). We provide additional notes on each of the \cref{tab:rxrx1_perf} configurations below, as well as extended results of a sweep over guidance strength in \cref{tab:rxrx1_extended}. Unless otherwise stated, all models use the training and inference parameters in \cref{tab:rxrx1_hyperparams}.

\begin{table}[ht] 
\caption{Training and inference hyperparameters for all RxRx1 models.}\label{tab:rxrx1_hyperparams}
\centering
\begin{tabular}{@{}lll@{}}
\toprule
 
\textbf{Optimiser} & Adam \{\texttt{lr}: $10^{-4}$, $\beta_1$: $0.9$, $\beta_2$: $0.999$, $\epsilon$: $10^{-8}$\} \\
 
\textbf{LR schedule} & Constant \\

\textbf{Gradient norm clipping} & 0.5 \\
 
\textbf{Global batch size} & 256 \\
  
\textbf{Augmentation} & Uniform dequantisation $\sim \mathcal{U}(0,1/256)$\\ & Random flipping (horizontal and vertical)   \\ \midrule

\textbf{Solver} & DoPri5 \{$\texttt{rtol} = \texttt{atol} = 10^{-5}$\} \\

\textbf{EMA} & Power function EMA \{$\sigma_{\text{rel}} = 0.01$\} \\

\bottomrule
\end{tabular}
\end{table}

 \paragraph{Config A} We match the setup of \citet{zhang2025cellflux} by randomly coupling control and perturbation images from the same biological experiment at train time and augmenting control images with noise sampled from $ \mathcal{N}(0,\mathbf{I})$ with probability $0.5$. We found that this setup ($\mathcal{C \rightarrow P}$) was susceptible to overfitting and that early stopping after 30k training steps produced the best results. Our final setup attains an FID of 29.5, which is a marginal improvement upon \citet{zhang2025cellflux}. We attribute this improvement to our simpler conditioning, combined with tuning the guidance strength. We had best results sampling using classifier-free guidance with strength 2.0.

\paragraph{Config B} We perform standard flow matching, defining the source distribution as $\mathcal{N}(0,\mathbf{I})$. The target distribution is the data distribution, with control images treated as an extra perturbation class. We train for the full 100k steps (approximately 80 H100 hours) and had best results sampling with guidance strength 1.0 (no guidance). Despite being the simplest of our configurations, it attained a strong FID of 9.90 and KID of 1.52. We additionally report the results of an undertrained variant, stopped after 25k steps. This is the cheapest configuration we trained, yet still beats prior state-of-the-art, with an FID of 12.3 and KID of 3.90. 

\paragraph{Config C} We take the checkpoint from \textit{Config B} and perform \textit{counterfactual}-style generation. This involves starting inference with a control image, solving the reverse ODE to get the corresponding noise, then solving the forward ODE with the new perturbation conditioning. Our best results came from sampling with guidance strength 1.0 (no guidance). This configuration gives a slightly worse FID of 13.4 compared to \textit{Config B}, but slightly improved KID of 1.14. Similar to \textit{Config A}, we found that the small number of available control images led to limited diversity in the generated images (although there was no overfitting issue this time, since we trained on the $\mathcal{N \rightarrow D}$ flow).

\paragraph{Config D} We repeat the experiment from \textit{Config B}, but train with minibatch optimal transport couplings \citep{pooladian2023multisample,tong2023improving}. This involves re-ordering samples in each minibatch to minimise the average earth-mover distance between the source-target pairings. We perform the reordering independently on each GPU process with a local batch size of 32. While minibatch optimal transport is intuitively motivated by the desire for straighter paths and better performance, we found no performance improvement over \textit{Config B} and only a marginal change in number of function evaluations (NFE) when using the DoPri5 adaptive step size solver, indicating that the curvature of the paths is similar. Best results came from sampling with guidance strength 1.5.

\paragraph{Config E} We repeat the counterfactual sampling experiment from \textit{Config C}, but this time using the optimal transport checkpoint from \textit{Config D}. While \citet{ribeiro2025counterfactual} find that OT is beneficial for counterfactual models at small scales, we found no benefit to using optimal transport couplings (FID of 13.6 and KID of 2.15). This is likely due to the inherent challenges in scaling minibatch OT to large datasets. Best results involved sampling with guidance strength 1.5.

\paragraph{Config F (best configuration)} We take the best of our previous ablations (\textit{Config B}) and use our scaled microscopy transformer (MiT-XL/2) architecture in place of the ADM U-Net. We train for 150k steps on 16 GPUs with a local batch size of 16 per GPU (approximately 500 H100 hours). This setup improved FID to 9.07 and KID to 0.84 (guidance strength 1.0).

\paragraph{Config G (w/ pretraining)} We pretrain our model configuration for 1M steps on 32 GPUs on the Phenoprints dataset (approximately 4k H100 hours). We then finetune on RxRx1 in the same manner as \textit{Config F} (250k steps on 16 GPUs with a local batch size of 16). This setup attained our best results, with an FID of 9.00 and KID of 0.74 (guidance strength 1.0). Comparing this to the prior state-of-the-art (CellFlux), our results represent an almost 4-fold FID improvement and 32-fold KID improvement.

\begin{table}[ht]
    \centering
    \caption{Extended results of RxRx1 experiments. For each configuration, we include a coarse-grained sweep over guidance strength $\{1.0, 1.5, 2.0\}$. In addition to conventional FID and KID, we report DinoV2 alternatives, as proposed by \citet{stein2023exposing} (FDD and KDD respectively). We include the mean number of function evaluations (NFE) used per batch for sampling with each configuration. Notice how using guidance strength $\not= 1$ approximately doubles NFE, since it requires two forward passes per step. Similarly, using counterfactual generation doubles it again since it requires solving two ODEs. Rows marked with * represent the configurations chosen for \cref{tab:rxrx1_perf}. KID and KDD numbers are scaled up by factors of $10^3$ and 10, respectively.}
    \begin{tabular}{@{}lcccccc@{}}
    \toprule
        & \textbf{NFE} & \textbf{FDD} $\downarrow$ & \textbf{KDD} $\downarrow$ &\textbf{FID} $\downarrow$ & \textbf{KID} $\downarrow$ \\ \midrule
        
        \textit{Config A ($\mathcal{C \rightarrow P}$)} \\
        
        \quad \textit{guidance strength 1.0} & 381 & 272 & 14.9 & 34.0 & 16.7 \\
        \quad \textit{guidance strength 1.5} & 826 & 266 & 14.1 & 31.2 & 14.5 \\
        \quad \textit{guidance strength 2.0*} & 918 & 261 & 13.8 & 29.5 & 14.0 \\ \midrule
        
        \textit{Config B ($\mathcal{N \rightarrow D}$)} \\
        \quad \textit{guidance strength 1.0*} & 299 & 51.0 & 1.75 & 9.90 & 1.52 \\
        \quad \textit{guidance strength 1.5} & 634 & 49.3 & 1.43 & 10.2 & 1.61 \\
        \quad \textit{guidance strength 2.0} & 661 & 57.9 & 1.92 & 11.7 & 28.7 \\ \midrule
        
        \textit{Config C ($\mathcal{N \leftrightarrow D})$}\\
        \quad \textit{guidance strength 1.0*} & 646 & 63.6 & 0.85 & 13.4 & 1.14 \\
        \quad \textit{guidance strength 1.5} & 1366 & 65.1 & 1.22 & 13.9 & 1.98 \\
        \quad \textit{guidance strength 2.0} & 1440 & 81.2 & 2.09 & 16.6 & 4.64 \\ \midrule
        
        \textit{Config D ($\mathcal{N \rightarrow D}$ + OT)} \\
        \quad \textit{guidance strength 1.0} & 293 & 58.3 & 2.45 & 10.9 & 2.13 \\
        \quad \textit{guidance strength 1.5*} & 622 & 53.7 & 1.85 & 10.5 & 1.61 \\
        \quad \textit{guidance strength 2.0} & 653 & 52.9 & 1.55 & 10.6 & 1.41 \\ \midrule

         \textit{Config E ($\mathcal{N \leftrightarrow D}$ + OT)} \\
         \quad \textit{guidance strength 1.0} & 643 & 80.9 & 1.76 & 17.3 & 4.59 \\
         \quad \textit{guidance strength 1.5}* & 1392 & 63.4 & 1.01 & 13.6 & 2.15 \\
         \quad \textit{guidance strength 2.0} & 1466 & 63.1 & 1.05 & 13.7 & 2.23 \\ \midrule
        
        \textit{Our model} \\
        \quad \textit{guidance strength 1.0*} & 313 & 41.8 & 0.89 & 9.07 & 0.84  \\
        \quad \textit{guidance strength 1.5} & 693 & 41.0 & 0.81 & 9.11 & 0.75 \\
        \quad \textit{guidance strength 2.0} & 754 & 45.4 & 0.87 & 9.63 & 1.00 \\ \midrule
        \textit{Our model + pretraining} \\
        \quad \textit{guidance strength 1.0*} & 331 & \textbf{40.7} & 0.75 & \textbf{9.00} & \textbf{0.74} \\
        \quad \textit{guidance strength 1.5} & 756 & 41.6 & \textbf{0.62} & 9.33 & 0.81 \\
        \quad \textit{guidance strength 2.0} & 850 & 46.5 & 0.89 & 10.1 & 1.30 \\ 
    \bottomrule
    \end{tabular}
    \label{tab:rxrx1_extended}
\end{table}

\subsection{Choosing the interpolant}

We tested two alternatives to the conventional linear interpolant flow matching objective: the generalised variance preserving interpolant \citep{ma2024sit}, and the Brownian bridge interpolant \citep{albergo2023stochastic}. We run all tests on \textit{Config B}, reporting our results in \cref{tab:rxrx1_interpolants}. Both strategies are intuitive, for example, variance preserving interpolants are motivated by shorter paths and Brownian bridges are motivated by using Gaussian (as opposed to Dirac) time-dependent distributions, which may provide a regularising effect. However, we found no evidence that adopting either of these strategies improved performance -- \textbf{we thus continue to use the linear interpolant for all our models.}

\begin{table}[ht]
    \centering
    \caption{Interpolant ablations. All runs use the setup from \textit{Config B} ($\mathcal{N\rightarrow D}$), with guidance strength 1.0.}
    \begin{tabular}{@{}llccc@{}}
    \toprule
        \textbf{Interpolant} & &\textbf{FID} $\downarrow$ & \textbf{KID} $\downarrow$ \\ \midrule
        
        \textit{Linear} & $tx_1+(1-t)x_0$ & 9.90 & 1.52 \\
        \textit{Variance preserving} & $x_1\mathrm{sin}(\frac{1}{2}\pi t) + x_0\mathrm{cos}(\frac{1}{2}\pi t)$ & 9.99 & 1.42 \\   
        \textit{Brownian bridge, $k=0.1$} & $tx_1+(1-t)x_0 + \epsilon k \sqrt{2t(1-t)}, \quad \epsilon \sim \mathcal{N}(0,1)$ & 14.5 & 6.58 \\
        \textit{Brownian bridge, $k=0.5$} & $tx_1+(1-t)x_0 + \epsilon k \sqrt{2t(1-t)}, \quad \epsilon \sim \mathcal{N}(0,1)$ & 103 & 90.2 \\
        \textit{Brownian bridge, $k=1.0$} & $tx_1+(1-t)x_0 + \epsilon k \sqrt{2t(1-t)}, \quad \epsilon \sim \mathcal{N}(0,1)$ & 180 & 172 \\
     
    \bottomrule
    \end{tabular}
    \label{tab:rxrx1_interpolants}
\end{table}

\subsection{Addressing misconceptions on conditioning and flow construction} \label{sec:addressing_misconceptions}

At first glance, our results in \cref{tab:rxrx1_perf}, showing that a simple $\mathcal{N \rightarrow D}$ diffusion setup outperforms $\mathcal{C \rightarrow P}$, seems to run contrary to the results reported by \citet{zhang2025cellflux}, who claim that a $\mathcal{C \rightarrow P}$ flow is necessary to model biological batch effects. The first difference comes from our simplified conditioning strategy. Where \citet{zhang2025cellflux} rely on the implicit conditioning given by using the control images to define the source distribution, we simply condition on an embedding of the experiment label, enabling $\mathcal{N \rightarrow D}$ models to capture batch effects\footnote{Note that this does not make our models any less flexible, since we can train with conditioning dropout to jointly learn a conditional and unconditional model \citep{ho2022classifier}.}. Since there is unlikely to be noise in the experiment labelling process, the experiment label acts as a sufficient statistic for the batch effects and thus conditioning on the control provides no additional information about them. This is a closely related strategy to \citet{atanackovic2025meta}, who develop the \textit{meta flow matching} framework, using distributional embeddings to enable generalising across populations.

Also noteworthy is that RxRx1 contains only 200 control images per experiment -- 160 in the training split and just 40 in the validation split. When the entropy of the source distribution is so low, the ability of the model to generate diverse images at inference time is limited. Furthermore, we hypothesise that the capacity of the model to capture the full target distribution at train time is effectively reduced, leading to the overfitting issue we observed (this is not just a limitation of RxRx1 -- similar experimental design is common in all high-throughput screening datasets). \citet{zhang2025cellflux} work around this by training with noise-augmented control images, finding that without noise augmentation, performance drops significantly. However, worth noting is that the noise augmentation used is so extreme (noise is sampled from $\mathcal{N}(0,\mathbf{I})$, and added to images with range [-1,1]) that it likely obscures much of the information that the control images were supposed to be providing and leads to a distribution shift at inference-time once noise augmentation is turned off.

In \citet{Zhangcellfluxv2}, which is concurrent to our work, the authors acknowledge these issues with $\mathcal{C \rightarrow P}$ training and combat it by proposing a two-stage training pipeline with additional regularisation. \citet{Zhangcellfluxv2} further claim that `\textit{batch effects make a noise-to-distribution formulation inappropriate}'. In this paper, we provide substantial evidence against this claim, demonstrating that $\mathcal{N \rightarrow D}$ setups can be simple, scalable, and performant when implemented correctly.

\subsection{Stabilising pixel space diffusion transformers}

We noticed that diffusion transformer models suffer from instability when training in pixel space on $96\times96\times6$ microscopy images. We tested three types of stability intervention involving changes to optimisation, dropout, and architecture (summarised in \cref{tab:stability_ablations}). The simplest and cheapest interventions we found were to apply $10\%$ dropout after the attention projection layer, adding long-range skip connections, and using RMSNorm in place of LayerNorm. We include these changes in our final models, visualised in \cref{fig:microscopy_transformer_arch}. We call this configuration the \textit{Microscopy Transformer} (MiT). 

\begin{table}[ht]
    \centering
    \caption{Stabilisation ablations for diffusion transformers. We consider a setup stable if it does not diverge within the first 100k steps when training from scratch on RxRx1. All models are trained using the $\mathcal{N\rightarrow D}$ flow.} \label{tab:stability_ablations}
    \begin{tabular}[b]{@{}lcl@{}}
    \toprule
        \textbf{Model} & \textbf{Stable} (\cmark/\xmark) & \textbf{Notes} \\ \midrule
        DiT-B/2 & \xmark & Larger variants are stable for longer.\\ \midrule

        \quad \textit{w/ $10^{-5}$ learning rate.} & \xmark & \\
        \quad \textit{w/ $0.01$ AdamW weight decay.} & \xmark & \citet{loshchilovdecoupled}. \\
        \quad \textit{w/ Warmup + inv sqrt decay.} & \xmark & Adapted from \citet{karras2024analyzing}. \\ 
        \quad \textit{w/ Adam $\beta_2 = 0.995$} & \xmark & \citet{berrada2024improved}, Appendix C. \\ \midrule
        \quad \textit{w/ MLP Dropout} & \xmark \\
        \quad \textit{w/ Attention Dropout} & \cmark  & Slightly reduced training throughput. \\
        \quad \textit{w/ Projection Dropout} & \cmark \\ \midrule
        \quad \textit{w/ RMSNorm} & \xmark &   \\
        \quad \textit{w/ RMSNorm + AdaRMS} & \xmark &   \\
        \quad \textit{w/ Block skip} & \xmark \\
        \quad \textit{w/ Block skip + DropPath} & \xmark \\
        \quad \textit{w/ Long range skip} & \cmark & Adapted from \citet{chen2024towards}.\\
    \bottomrule
    \end{tabular}
\end{table}

\paragraph{Optimisation} Somewhat surprisingly, we found that tuning optimisation hyperparameters did not prevent runs from diverging. We tried reducing the learning rate, adding weight decay, using sophisticated schedules, and changing the Adam momentum parameters. However, none of these interventions were successful and \textbf{we chose not to keep any of them in our final setup.} 

\paragraph{Dropout} The conventional DiT is trained without dropout, however we found stability improvements when including it. We tested dropout in three locations: in the MLP, on the attention weights, and after the attention projection layer. We found that only the latter two were useful for stability, and that attention dropout slightly reduced training throughput (we suspect this is due to breaking fused attention kernels). \textbf{We thus use only projection layer dropout in our final setup.}

\paragraph{Architecture} We tested five architecture adjustments: replacing all LayerNorms with RMSNorms; replacing AdaLN with AdaRMS (i.e. removing shift); adding ResNet-style inter-block skip connections; adding DropPath \citep[stochastic depth,][]{huang2016deep}; and adding UNet-style long-range skip connections \citep{chen2024towards}. Of these changes, only the long-range skip connections were helpful for stability. \textbf{We thus include long-range skip connections in our final setup.} Additionally, while using RMSNorm did not help stability, we found that it provided modest memory and throughput improvements and did not observe a performance drop. \textbf{We thus substitute RMSNorm for LayerNorm in our final setup.}

Our final MiT model matches the configuration of the XL/2 diffusion transformer variant (28 depth, 16 heads, 1152 hidden dimensions, 2 patch size). This model has roughly 700M params (excluding conditioning) and approximately one TFLOP per forward pass per $96\times96\times 6$ microscopy image.

\subsection{Qualitative sample comparisons}

In \cref{fig:rxrx1_samples_extra}, we present additional qualitative comparisons of randomly sampled real vs. generated images for three  SiRNA gene perturbations in RxRx1. Our generated samples demonstrate high fidelity and diversity compared to the real images.

\begin{figure}[htbp]
    \centering
    \includegraphics[width=0.8\linewidth]{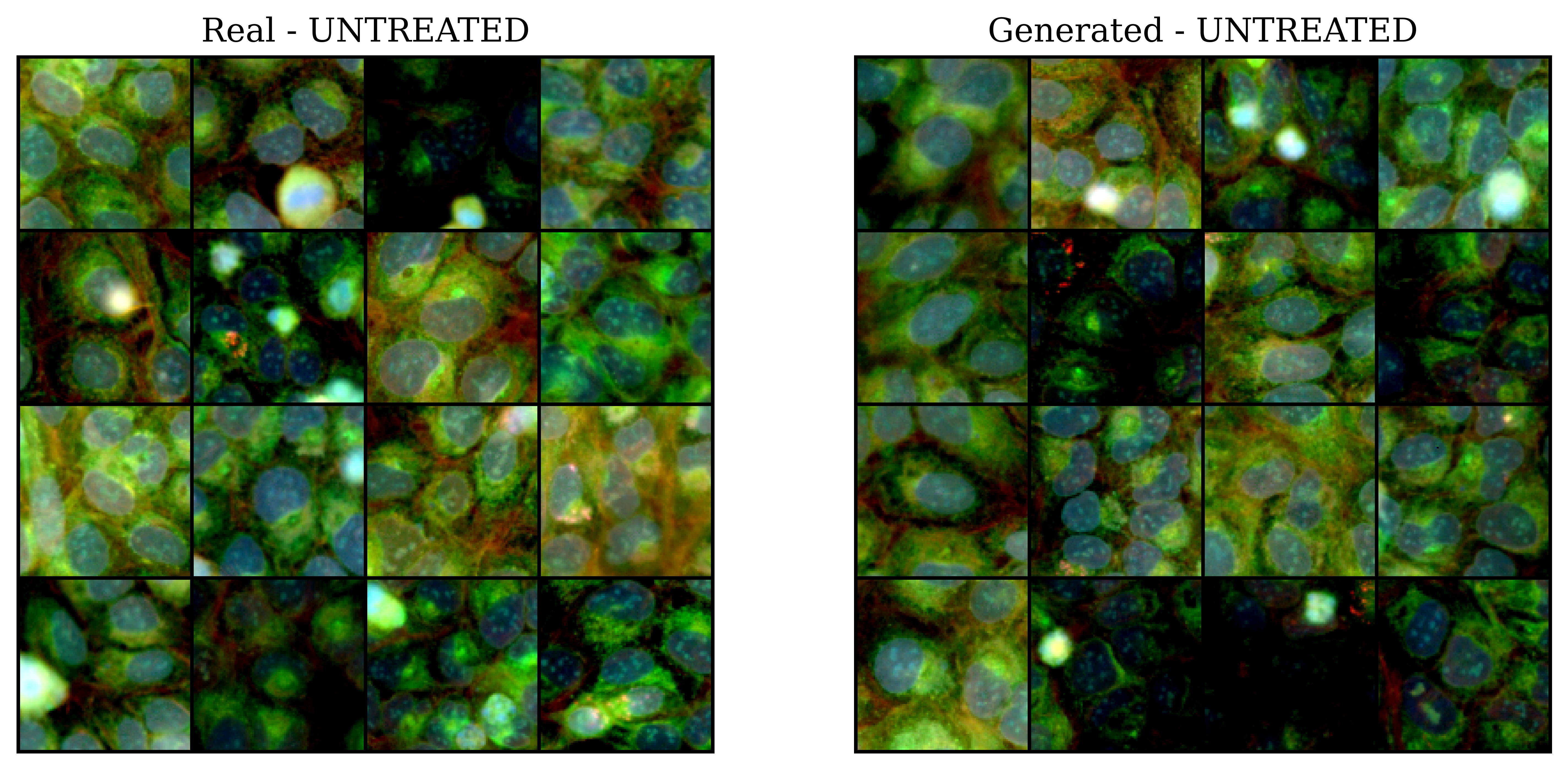}
    \includegraphics[width=0.8\linewidth]{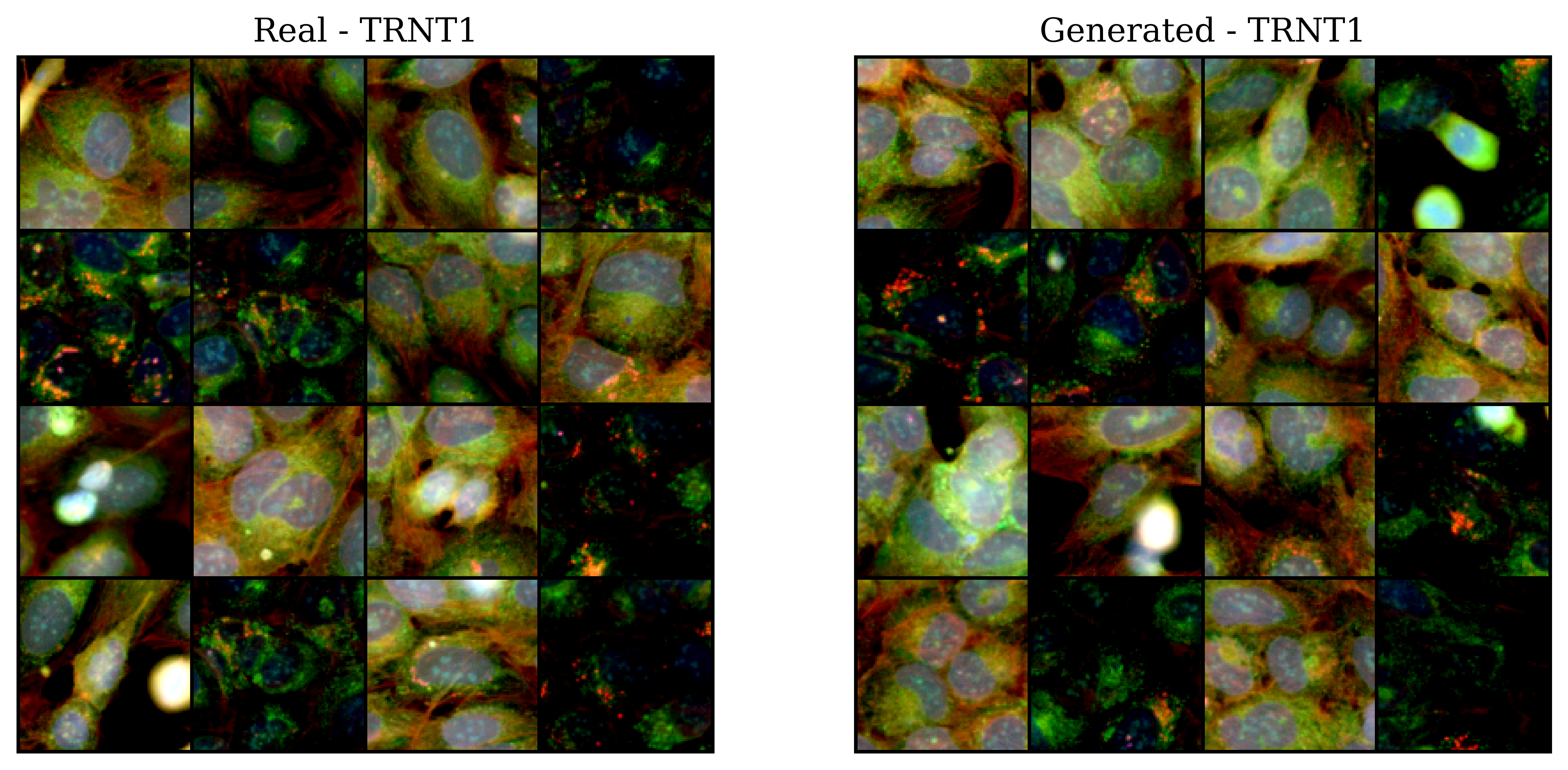}
    \includegraphics[width=0.8\linewidth]{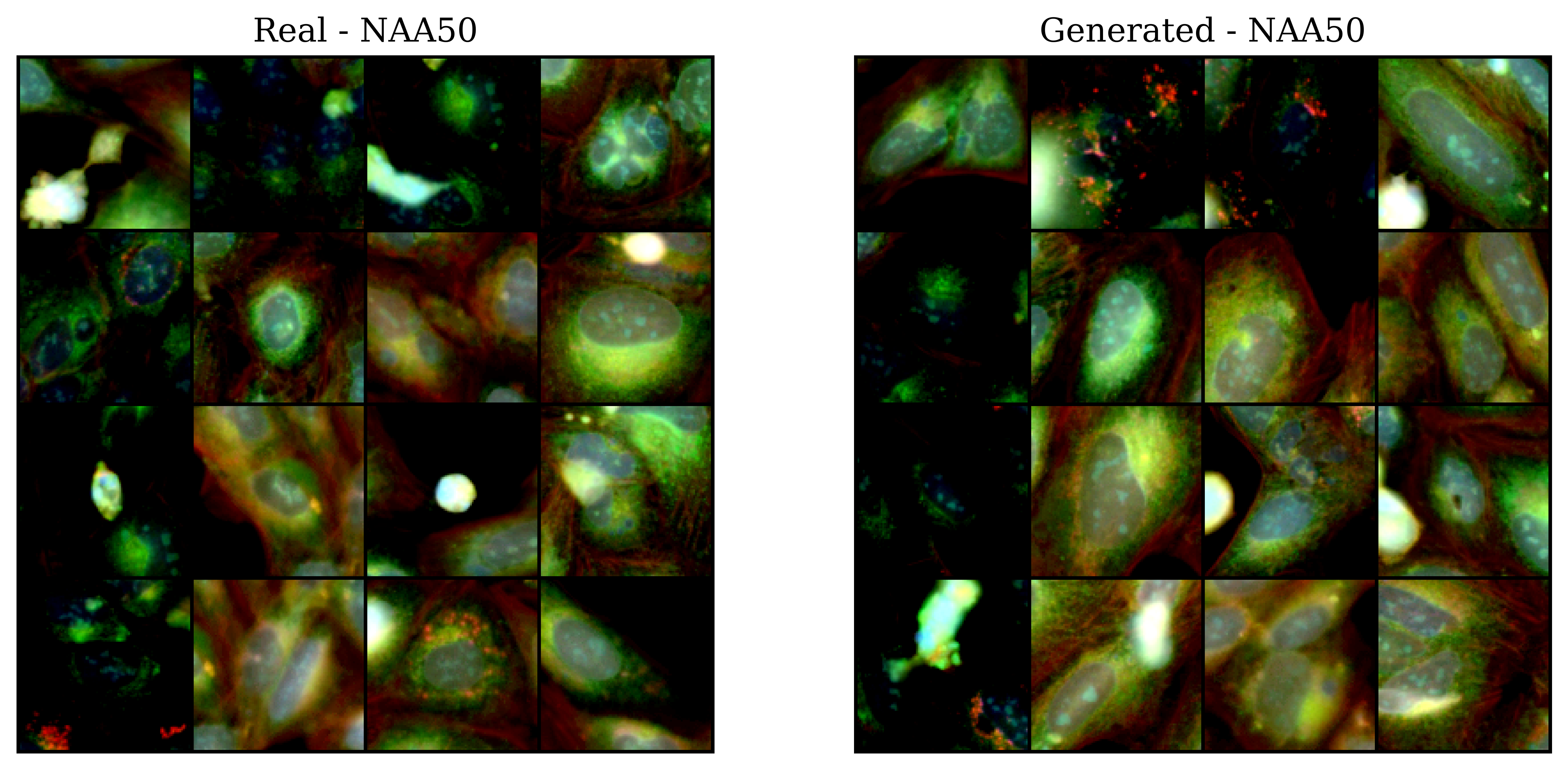}

    \caption{Randomly sampled real and generated images for three SiRNA gene perturbations in RxRx1. Samples are generated using our model (no pretraining), matching the composition of perturbations and biological experiments to the randomly sampled real images.}
    \label{fig:rxrx1_samples_extra}
\end{figure}

\subsection{Estimating training FLOPs}

We assume that each backward pass is approximately $2\times$ the compute of a forward pass and thus estimate total training compute in ExaFLOPs with the formula: FLOPs per forward pass per image $\times$ global batch size $\times$ steps $\times$ 3 $\times 10^{-18}$. Following convention in the machine learning literature, we assume fused multiply-accumulate operations (i.e. reported FLOPs $\approx \frac{1}{2}$ True FLOPs). We include notes below for each of our estimated values in \cref{tab:rxrx1_perf}.

\paragraph{CellFlux} \citet{zhang2025cellflux} use the ADM U-Net \citep{dhariwal2021diffusion}. Cloning their exact configuration, we found that it uses approximately 200 GFLOPs per forward pass per $96\times96\times 6$ image. \citet{zhang2025cellflux} report training for 100 epochs with a global batch size of 128. This leads to our 7.83 ExaFLOP estimate for total training cost.

\paragraph{CellFluxV2} \citet{Zhangcellfluxv2} use a latent diffusion transformer (DiT-XL/2) at $28\times28 \times 8$ resolution. Cloning and running the model, we found that it uses approximately 88 GFLOPs per forward pass per image, excluding the VAE (which we may neglect if we assume training with cached latents). \citet{Zhangcellfluxv2} report training for 200 epochs with a global batch size of 128, leading to our 6.89 ExaFLOP estimate.

\paragraph{IMPA} We cloned the IMPA model\footnote{Available at \url{github.com/theislab/IMPA}.} from \citet{palma2025predicting}, finding that the generator network uses approximately 7.8 GFLOPs per forward pass per $96\times96\times 6$ image, with the discriminator using approximately 2.3 GFLOPs. \citet{palma2025predicting} do not provide any details on training time, steps, or batch size, nor do \citet{zhang2025cellflux} report any details about their reproduction. If we assume roughly equivalent batch size and epochs to CellFlux (i.e. only the model was changed between experiments), we may estimate that the total training compute is approximately 5\% of CellFlux, or 0.39 ExaFLOPs.

\paragraph{PhenDiff} \citet{bourou2024phendiff} provide few details about their model or training, however, their repository\footnote{Available at \url{github.com/thethomasboyer/PhenDiff}.} appears to use the Stable Diffusion 2.1 model and VAE combination, with inputs resized to $128\times128\times 6$. This uses roughly 50 GFLOPs per forward pass per sample. From their config files, they appear to train for 50k steps, with a global batch size of 128 (2 GPUs $\times$ 64 per GPU). These numbers lead to our estimate of 0.96 ExaFLOPs training compute.

\clearpage
\newpage

\section{Supplementary material for \S \ref{sec:simulation} (BBBC021 experiments)} \label{sec:ood_details}

\subsection{The BBBC021 dataset}

The BBBC021 dataset \citep{caie2010high,ljosa2012annotated} consists of 40k $1024\times1280\times3$ images of wells collected from a phenotypic screen of 113 small molecules on the MCF7 cell type. We use identical preprocessing to \citet{palma2025predicting}, resulting in 96k $96\times96\times3$ RGB illumination-corrected crops, centered on cell nuclei. We use the splits from \citet{zhang2025cellflux}, consisting of 5237 held-out images of the following seen molecule perturbations: 

\begin{multicols}{3}
\begin{itemize}
    \item \texttt{Alsterpaullone}
    \item \texttt{AZ138}
    \item \texttt{AZ258}
    \item \texttt{Bryostatin}
    \item \texttt{Camptothecin}
    \item \texttt{Chlorambucil}
    \item \texttt{Cisplatin}
    \item \texttt{Colchicine}
    \item \texttt{Cytochalasin B}
    \item \texttt{Etoposide}
    \item \texttt{Floxuridine}
    \item \texttt{Methotrexate}
    \item \texttt{Mevinolin}
    \item \texttt{Mitomycin C}
    \item \texttt{Mitoxantrone}
    \item \texttt{PD-169316}
    \item \texttt{PP-2}
\end{itemize}
\end{multicols}

Similarly, the out-of-distribution validation set has 1979 images of the following unseen perturbations (we remove all images of these perturbations from the training set):
\begin{multicols}{2}
\begin{itemize}
    \item \texttt{AZ841}
    \item \texttt{Cyclohexamide}
    \item \texttt{Cytochalasin D}
    \item \texttt{Docetaxel}
    \item \texttt{Epothilone B}
    \item \texttt{Lactacystin}
    \item \texttt{Latrunculin B}
    \item \texttt{Simvastatin}
\end{itemize}
\end{multicols}

We verified our evaluation pipeline with the generated images from \citet{zhang2025cellflux}, reproducing their reported seen FID to within 0.2 (18.5) and KID to within 0.001 (0.0152).

\subsection{Training and inference}

We provide additional notes on each of the \cref{tab:bbbc_perf} configurations below, as well as extended results in \cref{tab:bbbc_extended}. Unless otherwise stated, all models use the training and inference parameters in \cref{tab:rxrx1_hyperparams}.

\paragraph{Our model (one-hot)} Similar to our model setup from RxRx1, we train MiT-XL/2 from scratch for 100k steps on 16 GPUs with a local batch size of 16 per GPU. Since we are using one-hot perturbation embeddings, we are only able to generate images for seen perturbations. We use guidance strength 1.0, attaining a state-of-the-art FID of 4.03 and KID of 0.31.

\paragraph{Our model (unconditional)} We take the checkpoint from \textit{our model (one-hot)} and perform unconditional sampling by replacing the perturbation labels with the null token (the model was trained with conditioning dropout to enable this method of unconditional sampling). As expected, our unconditional model achieves a slightly worse FID and KID on seen perturbations to the conditional variant (4.63 and 0.68, respectively). On unseen perturbations, it attains an FID of 22.7 and KID of 13.5, beating all prior results except CellFluxV2.  

\paragraph{Our model (with Morgan fingerprints)} We take the checkpoint from \textit{our model (one-hot)} and replace the one-hot encoding with Morgan fingerprint embeddings of the molecules. We freeze the main model (except for the AdaLN modulation layers) and train a 100M parameter transformer adaptor\footnote{We implement the adaptor using the same MiT transformer architecture as the main model, with 6 depth, 6 heads, and 1152 hidden dimensions. We use a continuous embedding of the compound dosage as the conditioning tokens for the adaptor itself. Note, however, that we treat Morgan fingerprints as single tokens, effectively making the adaptor equivalent to a large MLP.} to map the Morgan fingerprints and compound dosage to the conditioning space of the model. We train for 10k steps on 8 GPUs with a batch size of 16 per GPU. 

\paragraph{Our model (with MolGPS)} We repeat the previous experiment, using MolGPS \citep{sypetkowski2024scalability} embeddings of the compounds in place of Morgan fingerprints. This setup achieved the best FID on unseen compounds to date (18.5).

\begin{table}[ht]
    \centering
    \caption{Extended results of BBBC021 experiments. We use guidance strength 1.0 for all conditional models (the unconditional variant uses 0.0). KID and KDD numbers are scaled up by factors of $10^3$ and 10, respectively.}
    \begin{tabular}{@{}lcccccc@{}}
    \toprule
        & \textbf{NFE} & \textbf{FDD} $\downarrow$ & \textbf{KDD} $\downarrow$ &\textbf{FID} $\downarrow$ & \textbf{KID} $\downarrow$ \\ \midrule

        \textbf{Seen molecules} \\
        \quad\textit{Ours (one-hot)} & 239 & \textbf{23.2} & 0.63 & \textbf{4.03} & 0.31 \\

        \quad \textit{Ours (uncond.)} & 242 & 25.0 & 0.58 & 4.63 & 0.68\\
        \quad \textit{Ours (Morgan)} & 241 & 23.9 & 0.50 & 4.16 & 0.18 \\
        \quad \textit{Ours (MolGPS)} & 240 & \textbf{23.2} & \textbf{0.46} & 4.12 & \textbf{0.16} \\ \midrule

        \textbf{Unseen molecules} \\
        \quad\textit{Ours (one-hot)} & - & - & - & - & - \\

        \quad \textit{Ours (uncond.)} & 241 & 117 & 5.76 & 22.7 & 13.5 \\
        \quad \textit{Ours (Morgan)} & 245 & 113 & 6.07 & 19.9 & 12.0 \\
        \quad \textit{Ours (MolGPS)} & 245 & \textbf{106} & \textbf{5.42} & \textbf{18.5} & \textbf{9.94} \\
        \bottomrule
    \end{tabular}
    \label{tab:bbbc_extended}
\end{table}

\subsection{Qualitative sample comparisons}

In \cref{fig:bbbc_iid_samples_extra,fig:bbbc_ood_samples_extra}, we present additional qualitative comparisons of randomly sampled real vs. generated images for three seen and three unseen molecule perturbations in BBBC021.

\clearpage
\begin{figure}[htbp]
    \centering
    \includegraphics[width=0.8\linewidth]{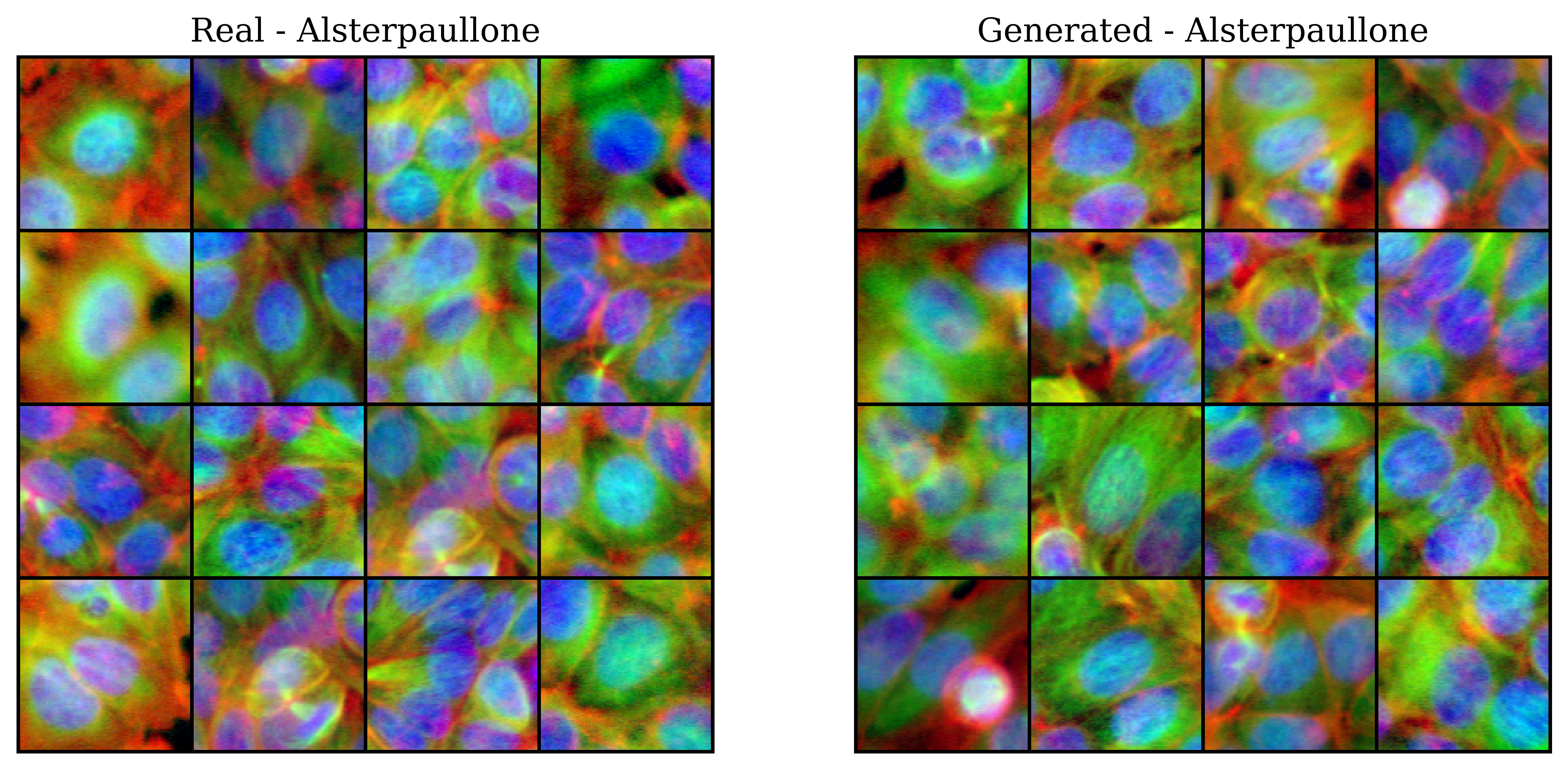}
    \includegraphics[width=0.8\linewidth]{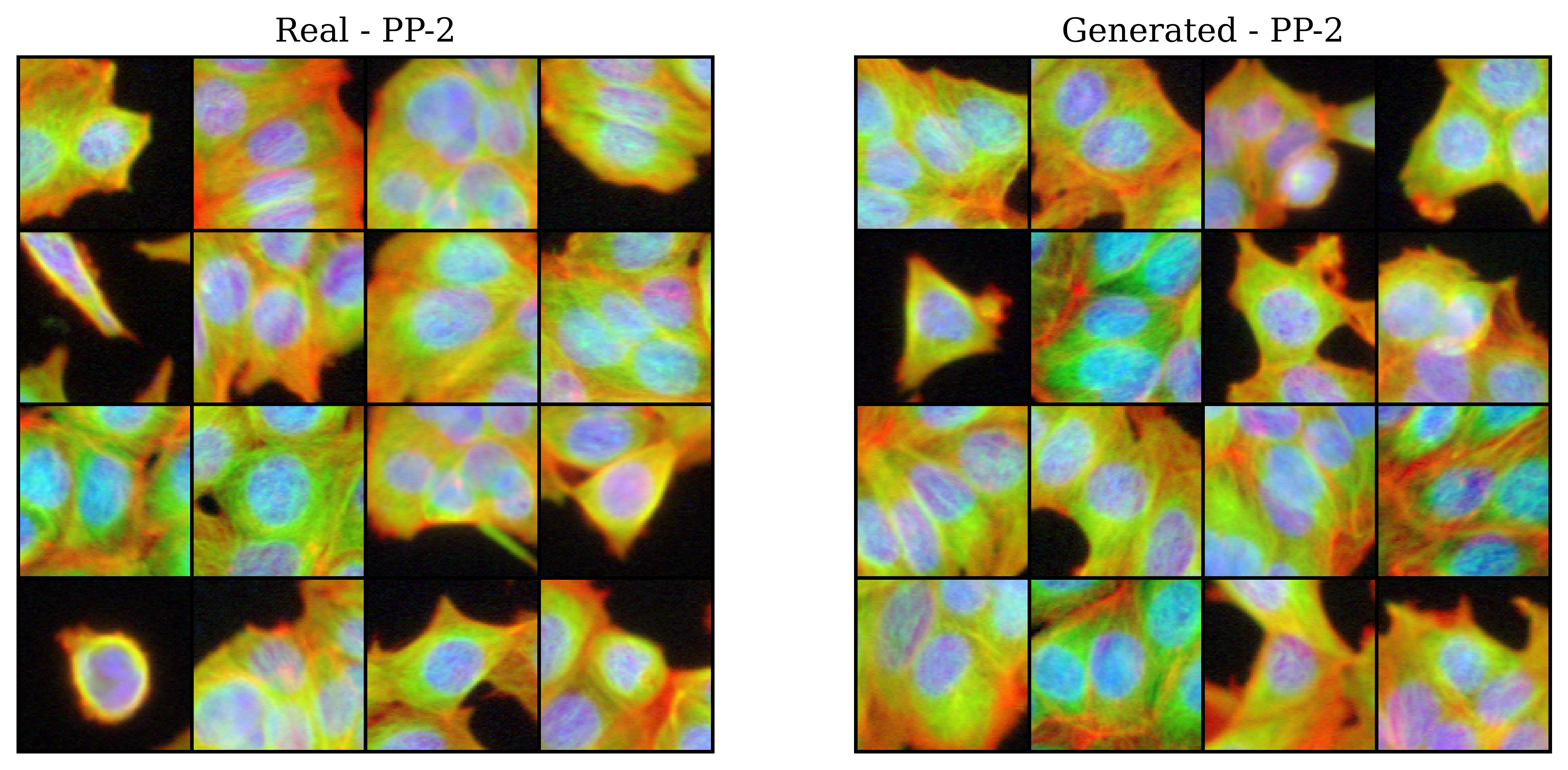}
    \includegraphics[width=0.8\linewidth]{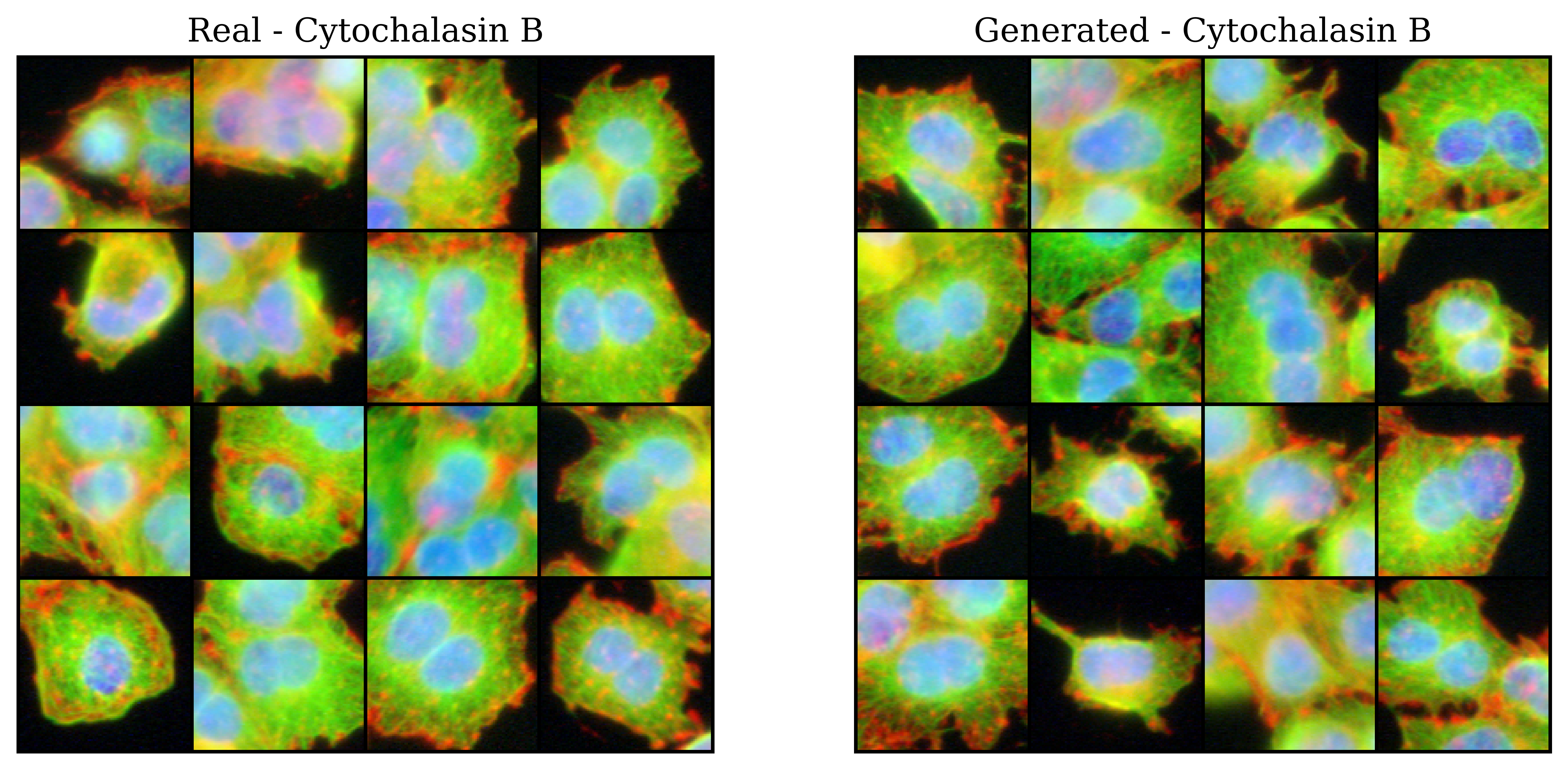}

    \caption{Randomly sampled real and generated images for three seen molecule perturbations on BBBC021. Samples are generated using our model (with Morgan fingerprints), matching the composition of perturbations and experiments to the randomly sampled real images.}
    \label{fig:bbbc_iid_samples_extra}
\end{figure}

\clearpage
\begin{figure}[htbp]
    \centering
    \includegraphics[width=0.8\linewidth]{assets_compressed/grid_pert-AZ841.png}
    \includegraphics[width=0.8\linewidth]{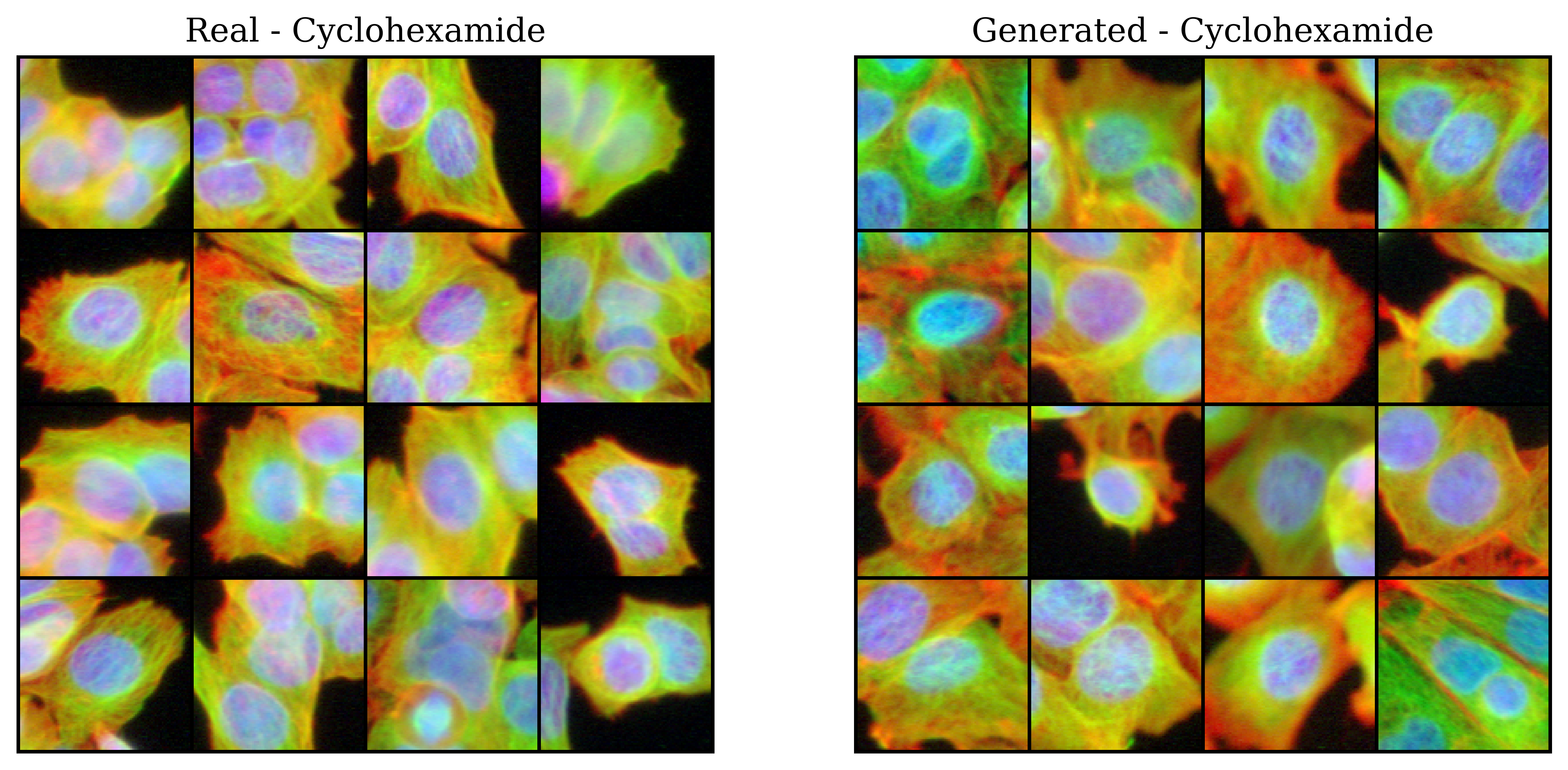}
    \includegraphics[width=0.8\linewidth]{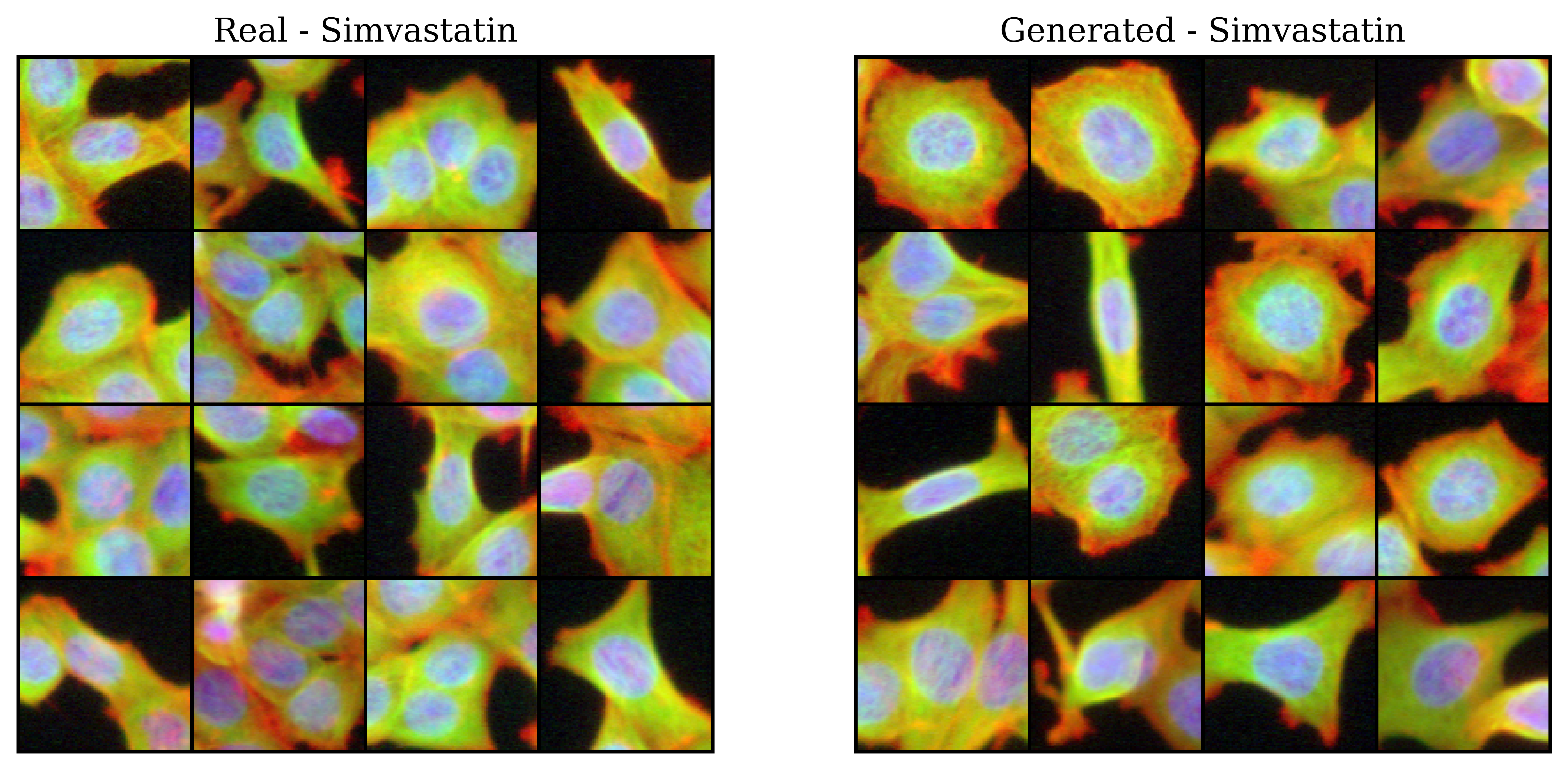}

    \caption{Randomly sampled real and generated samples for three unseen molecule perturbations on BBBC021. Samples are generated using our model (with Morgan fingerprints), matching the composition of perturbations and experiments to the randomly sampled real images.}
    \label{fig:bbbc_ood_samples_extra}
\end{figure}

\end{document}